\def\eqref#1{equation~\ref{#1}}
\def\1{\bm{1}}
\DeclareMathAlphabet{\mathsfit}{\encodingdefault}{\sfdefault}{m}{sl}
\SetMathAlphabet{\mathsfit}{bold}{\encodingdefault}{\sfdefault}{bx}{n}
\newcommand\DoToC{%
  \startcontents
  \printcontents{}{1}{\noindent \textbf{\large{Table of Contents}}\vskip3pt\vskip5pt}
  \vskip3pt\vskip5pt
}
\definecolor{JalapenoRed}{RGB}{183,21,64}
\definecolor{Belize}{RGB}{41,128,185}
\definecolor{Amour}{RGB}{238,82,83}
\newcommand{\ie}{\textit{i}.\textit{e}.}
\newcommand{\eg}{\textit{e}.\textit{g}.} 
\newcommand{\iid}{\textit{i}.\textit{i}.\textit{d}}
\def\model{\texttt{EAC}~}
\newcommand{\firstres}[1]{{\textcolor{magenta}{\textbf{#1}}}}
\newcommand{\secondres}[1]{{\textcolor{cyan}{\underline{#1}}}}
\newcommand{\gcmark}{\textcolor[RGB]{18,220,168}{\checkmark}}
\newcommand{\rxmark}{\textcolor[RGB]{202,12,22}{\ding{55}}}
\newtheorem{theorem}{Theorem}
\newtheorem{proposition}[theorem]{Proposition}
\let\orig@fnsymbol\@fnsymbol
\def\@fnsymbol#1{\ifcase#1\or\relax\else\orig@fnsymbol{#1}\fi}
\title{Expand and Compress: \\Exploring Tuning Principles for Continual Spatio-Temporal Graph Forecasting}
\author{
\parbox{\textwidth}{
Wei Chen,Yuxuan Liang\textsuperscript{$*$}
}
}
\affiliation{INTR \& DSA Thrust, The Hong Kong University of Science and Technology (Guangzhou)}
\abstract{
The widespread deployment of sensing devices leads to a surge in data for spatio-temporal forecasting applications such as traffic flow, air quality, and wind energy. Although spatio-temporal graph neural networks (STGNNs) have achieved success in modeling various static spatio-temporal forecasting scenarios, real-world spatio-temporal data are typically received in a streaming manner, and the network continuously expands with the installation of new sensors. Thus, spatio-temporal forecasting in streaming scenarios faces dual challenges: the inefficiency of retraining models over newly-arrived data and the detrimental effects of catastrophic forgetting over long-term history. To address these challenges, we propose a novel prompt tuning-based continuous forecasting method, \model, following two fundamental tuning principles guided by empirical and theoretical analysis: \textit{\underline{e}xpand \underline{a}nd \underline{c}ompress}, which effectively resolve the aforementioned problems with lightweight tuning parameters. Specifically, we integrate the base STGNN with a continuous prompt pool, utilizing stored prompts (\ie, few learnable parameters) in memory, and jointly optimize them with the base STGNN. This method ensures that the model sequentially learns from the spatio-temporal data stream to accomplish tasks for corresponding periods. Extensive experimental results on multiple real-world datasets demonstrate the multi-faceted superiority of \model over the state-of-the-art baselines, including effectiveness, efficiency, universality, etc.
}
\date{\sffamily 16 Oct, 2024}
\begin{document}

\maketitle

\makeatletter
\let\@fnsymbol\orig@fnsymbol
\makeatother
\section{Introduction}

Spatio-temporal data is ubiquitous in various applications, such as traffic management~\citep{avila2020data}, air quality monitoring~\citep{liang2023airformer}, and wind energy deployment~\citep{yang2024skillful}. Spatio-temporal graph neural networks (STGNNs)~\citep{jin2023spatio,jin2024survey} have become a predominant paradigm for modeling such data, primarily due to their powerful spatio-temporal representation learning capabilities, which consider the both spatial and temporal dimensions of data by learning temporal representations of graph structures. However, most existing works~\citep{li2017diffusion,wu2019graph,bai2020adaptive,cini2023scalable,han2024bigst} assume a static setup, where STGNN models are trained on the entire dataset over a limited time period and maintain fixed parameters after training is completed. In contrast, real-world spatio-temporal data~\citep{liu2024largest,yin2024xxltraffic} typically exists in a streaming format, with the underlying network structure expanding through the installation of new sensors in surrounding areas, resulting in a constantly evolving spatio-temporal network. \textit{Due to computational and storage costs, it is often impractical to store all data and retrain the entire STGNN model from scratch for each time period}.

To address this problem, several straightforward solutions are available, as illustrated in Figure~\ref{fig.compare}. The simplest approach involves \textbf{\textit{pre-training}} an STGNN (using node-count-free graph convolution operators) for testing across subsequent periods. However, due to distribution shifts~\citep{wang2024stone}, this method often fails to adapt to new period data. Another approach involves model \textbf{\textit{retraining}} and prediction on different data windows due to graph expansion. Unfortunately, this neglects the informational gains from historical data, leading to limited performance improvements. To simultaneously resolve the challenges posed by these two issues, a more effective solution is to adopt a continual learning paradigm~\citep{wang2024comprehensive}, which is a research area focused on how systems learn sequentially from continuous related data streams. Specifically, the core idea is to load the previously trained model for new period data and conduct \textit{\textbf{continual-training}} on the current period. \textit{Nevertheless, the notorious problem of catastrophic forgetting~\citep{van2024continual} in neural networks often hinders the improvement of online learning performance.}

Current methods for continual spatio-temporal forecasting typically follow various types of continual learning approaches for improvement. For example, TrafficStream~\citep{chen2017traffic} comprehensively integrates regularization and replay-based methods to learn and adapt to ongoing data streams while retaining past knowledge to enhance performance. PECPM~\citep{wang2023pattern} and STKEC~\citep{wang2023knowledge} further refine replay strategies to detect stable and changing node data for better adaptation. TFMoE~\citep{lee2024continual} method considers training a sets of mixture of experts models for adapting to new nodes, thereby improving efficiency. \textit{Though promising, the aforementioned methods still involve optimizing the entire STGNN model, resulting in complex tuning costs and failing to mitigate the problem of catastrophic forgetting in a principle way.}

\begin{figure}[t!]
% \vspace{-5mm}
\begin{center}
\includegraphics[width=0.9\linewidth]{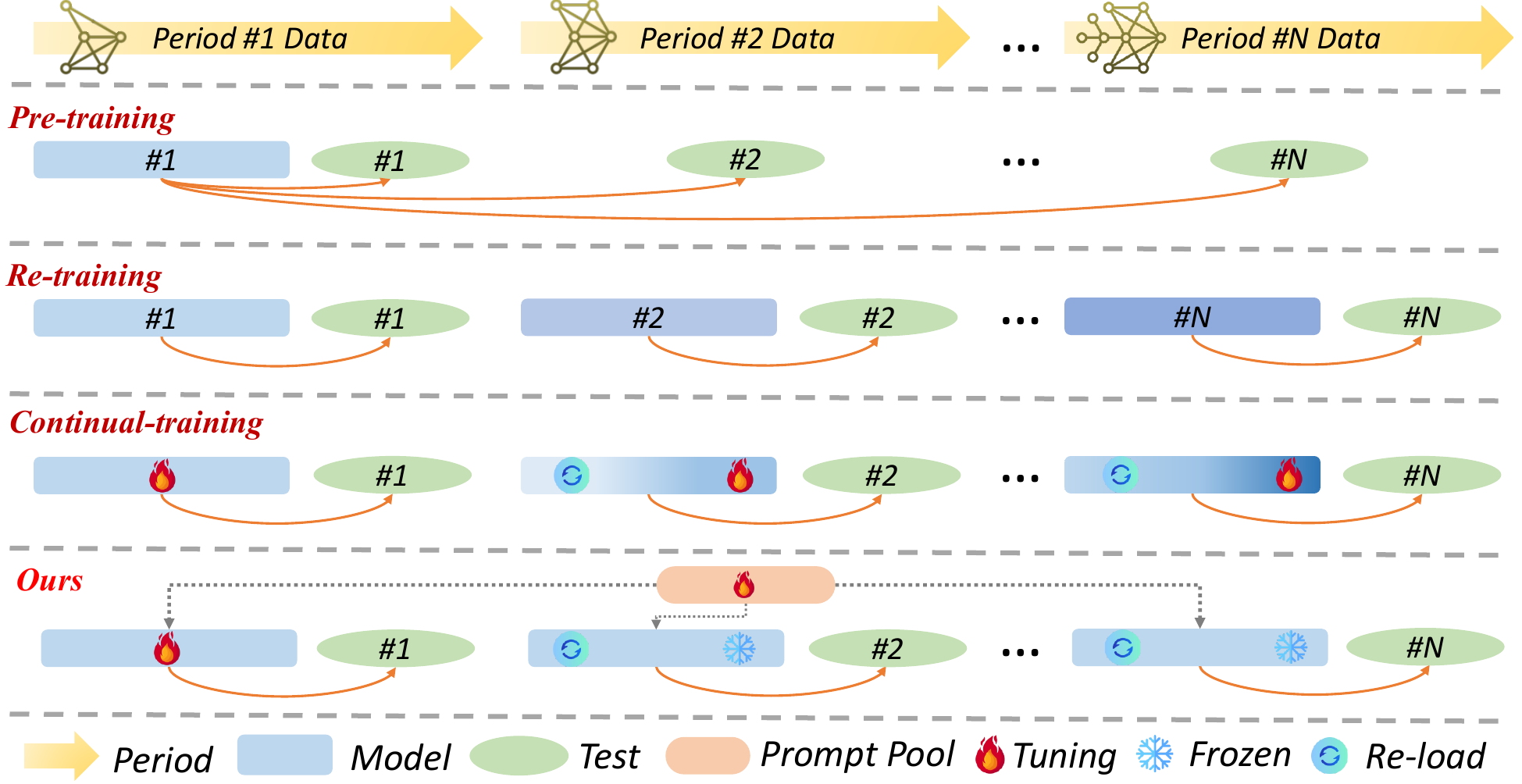}
\caption{Comparison of classic schemes and \model for continual spatio-temporal forecasting.} 
\label{fig.compare}
% \vspace{-6mm}
\end{center}
\end{figure}

To this end, we propose \model, a novel continual spatio-temporal graph forecasting framework based on \textbf{\textit{a continuous prompt parameter pool}} for modeling streaming spatio-temporal data. Specifically, we freeze the base STGNN model during the continual learning process to prevent knowledge forgetting, adapting solely through a dynamically adjustable prompt parameter pool to accommodate the continuously emerging expanded node data while further storing the knowledge acquired from streaming spatio-temporal data. Notably, we explore \textit{\textbf{two fundamental tuning principles}}, expand and compress, through empirical and theoretical analyses to balance model effectiveness and efficiency. \model has five distinctive characteristics: \textit{(\romannumeral1) \textbf{Simplicity}}: it accomplishes complex continual learning tasks solely by tuning the prompt parameter pool; \textit{(\romannumeral2) \textbf{Effectiveness}} : it demonstrates consistent performance across multiple real-world datasets; \textit{(\romannumeral3) \textbf{Universality}} : it demonstrates consistent performance across different STGNN architectures; \textit{(\romannumeral4) \textbf{Efficiency}}: it accelerates model training speed by effectively freezing the backbone model; and \textit{(\romannumeral5) \textbf{Lightweight}}: it requires adjustment of only a limited number of parameters in the prompt pool. In summary, our main contributions are:

\begin{itemize}[leftmargin=2.5mm,parsep=2pt]
    \item We propose a prompt-based continual spatio-temporal forecasting framework \model that is simple, effective, and efficient with lightweight tunable parameters.
    \item Through empirical observations and theoretical analysis, we explore two tuning principles for continual spatio-temporal forecasting: the heterogeneity property for expansion and low-rank property for compression in our continuous prompt parameter pool.
    \item Based on the two proposed tuning principles, we introduce two implementation schemes: continuous prompt pool growth and continuous prompt pool reduction.
    \item Experimental results on different scenarios of real-world datasets from different domains demonstrate the effectiveness and universal superiority of \model.
\end{itemize}
% \vspace{-2mm}

\section{Related Work}
\vspace{-2mm}

\textbf{Spatio-temporal Forecasting.}
Spatio-temporal forecasting originates from time series analysis and can be viewed as a temporal data modeling problem within an underlying network. Traditional statistical models, such as ARIMA~\citep{box1970distribution} and VAR~\citep{biller2003modeling}, as well as advanced time series deep learning models~\citep{nie2023time,wu2023timesnet}, can simplify this into a single time series forecasting task. Even though, these methods fail to capture spatio-temporal correlations between different locations, leading to suboptimal performance. STGNNs, due to their inherent ability to aggregate local spatio-temporal information, are considered powerful tools for modeling this data. STGNNs consist of two key components: a graph operator module for spatial relationship modeling, typically categorized into spectral GNNs~\citep{yu2017spatio}, spatial GNNs~\citep{li2017diffusion}, or hybrids, and a sequence operator module for temporal relationship modeling, which can be recurrent-based~\citep{pan2019urban}, convolution-based~\citep{wu2019graph}, attention-based~\citep{guo2019attention}, or a combination of these networks. \textit{However, most spatio-temporal graph forecasting models focus on static settings with limited-period forecasting scenarios.}

\textbf{Continual Learning.}
Continual learning is a technique for sequentially training models as data from related tasks arrives in a streaming manner. Common approaches include regularization-based~\citep{kirkpatrick2017overcoming}, replay-based~\citep{rolnick2019experience}, and prototype-based~\citep{de2021continual} methods, all aimed at learning knowledge from new tasks while retaining knowledge from previously tasks. However, these methods primarily focus on vision and text domains~\citep{wang2024comprehensive}, assuming that samples are \iid. TrafficStream~\citep{chen2017traffic} first integrates the ideas of regularization and replay into continual spatio-temporal forecasting scenarios. PECPM~\citep{wang2023pattern} and STKEC~\citep{wang2023knowledge} further incorporate prototype-based ideas for enhancement. TFMoE~\citep{lee2024continual} advances replay data into a generative reconstruction approach, equipped with a mixture of experts model. Additionally, some methods consider diverse perspectives such as few-shot scenarios~\citep{wang2024towards}, large-scale contexts~\citep{wang2024make}, and combinations with reinforcement learning~\citep{xiao2022streaming} and data augmentation~\citep{miao2024unified}. \textit{Nonetheless, most methods in dynamic scenarios still require tuning all STGNN parameters, resulting in the dual challenges of catastrophic forgetting and inefficiency.}

\textbf{Prompt Learning.}
Prompt learning suggest simply tuning frozen language or vision models to perform downstream tasks by learning prompt parameters attached to the input to guide model predictions. Some studies~\citep{yuan2024unist,li2024flashst} attempt to integrate it into spatio-temporal forecasting, but they still focus on static scenarios. Other methods have applied it to continual learning contexts; however, they only concentrate on vision~\citep{wang2022learning} and text~\citep{razdaibiedina2023progressive} domains. In contrast to the various prompt-based approaches in the existing literature, a naive application of prompt learning in our context is to append learnable parameters $P$ (referred to as prompts) to the original spatio-temporal data $X$, resulting in a fused embedding $X' = [X~\Vert~P]$, which is then fed into the base STGNN model $f_\theta(X')$ for spatio-temporal forecasting. \textit{Notably, we design a novel prompt pool learning mechanism, guided by two tuning principles derived from empirical and theoretical analysis, to model continual spatio-temporal forecasting.}

\section{Preliminaries}

\noindent \textbf{Definition (Dynamic Streaming Spatio-temporal Graph).}
We consider a dynamic streaming spatio-temporal graph $\mathbb{G} = (\mathcal{G}_1, \mathcal{G}_2, \dots, \mathcal{G}_\mathcal{T}) $, for every time interval $\tau$, the network dynamically grows, \ie, $\mathcal{G}_\tau = \mathcal{G}_{\tau-1} + \Delta \mathcal{G}_\tau$. Specifically, the network in the $\tau$-th time interval  is modeled by the graph $\mathcal{G}_\tau = (\mathcal{V}_\tau, \mathcal{E}_\tau, A_\tau)$, where $\mathcal{V}_\tau$ is the set of nodes corresponding to the $|\mathcal{V}_\tau| = n$ sensors in the network, and $\mathcal{E}_\tau$ signifies the edges connecting the node set, which can be further represented by the adjacency matrix $A_\tau \in \mathbb{R}^{n_\tau \times n_\tau}$. The node features are represented by a three-dimensional tensor $X_\tau \in \mathbb{R}^{{n}\times{t}\times{c}}$, denoting the $c$ features of the records of all $n$ nodes observed on the graph $\mathcal{G}_\tau$ in the past $t$ time steps.
Following~\citep{chen2017traffic}, $c$ here is usually only a numerical value.
% , where $\tau$ is the index of the training task in the $\tau$-th time interval

\noindent \textbf{Problem (Continual Spatio-temporal Graph Forecasting).}
The continual spatio-temporal graph forecasting can be viewed as learning the optimal prediction model for the current stage from dynamic streaming spatio-temporal graph data. Specifically, given the training data $\mathcal{D}=\{D_\tau|(\mathcal{G_\tau},X_\tau,Y_\tau)\}_{\tau=1}^{\mathcal{T}} \sim \mathcal{P}$ from a sequence of streaming data, our goal is to incrementally learn the optimal model parameters $f_{\theta^{*}}$ from the sequential training set. For the current $\tau$-th time interval, the model is optimized to minimize:
\begin{equation}
    f_{{\theta}^{(\tau)*}}=\operatorname*{argmin}_{{\theta}^{(\tau)}}\mathbb{E}_{D_\tau\sim \mathcal{P}^{(\tau)}}[\mathcal{L}(f_{{\theta}^{(\tau)}}(\mathcal{G_\tau},X_\tau),Y_\tau)],
\end{equation}
where $f_{\theta^{(\tau)*}}$ represents the optimal model that achieves the minimum loss when trained on the data from the current period $\tau$. The loss function $\mathcal{L}(\cdot)$ measures the discrepancy between the predicted signals $\widehat{Y}_\tau=f_{\theta^{(\tau)*}}(\mathcal{G_\tau},X_\tau)\in\mathbb{R}^{n \times t\times c}$ for next $t$ time steps and the ground-truth $Y_\tau$.

\section{Methodology}

% \revision{
In this section, we propose two tuning principles, \textit{\textbf{expand}} and \textit{\textbf{compress}}, through detailed empirical observations and theoretical analysis. Based on these principles, we apply them to a prompt parameter pool to develop the continual spatio-temporal graph forecasting framework \model (as shown in Figure~\ref{fig.framework}). Specifically, we design a node-level prompt parameter pool corresponding to the input spatio-temporal data of different nodes, jointly optimized within the STGNN backbone. \ding{182} For the expand process, empirical studies reveal that the prompt parameter pool adapts to dynamic heterogeneity, which we further analyze theoretically. Building on this, we show that expanding prompt parameters for newly introduced nodes effectively accommodates heterogeneity in continuous spatio-temporal scenarios. \ding{183} For the compress process, empirical results indicate that the prompt parameter pool exhibits a low-rank property, which we formalize through detailed analysis. Based on this, we show that high-dimensional prompt parameters can be compressed into two low-dimensional components, mitigating parameter inflation caused by expansion in continuous spatio-temporal scenarios.
We also summarize the workflow of \model in Algorithm~\ref{algo.eac}, and provide a detailed explanation of the continual learning process in Appendix~\ref{method_process}.
% }

\begin{figure}[htbp!]
% \vspace{-4mm}
\begin{center}
\includegraphics[width=0.98\linewidth]{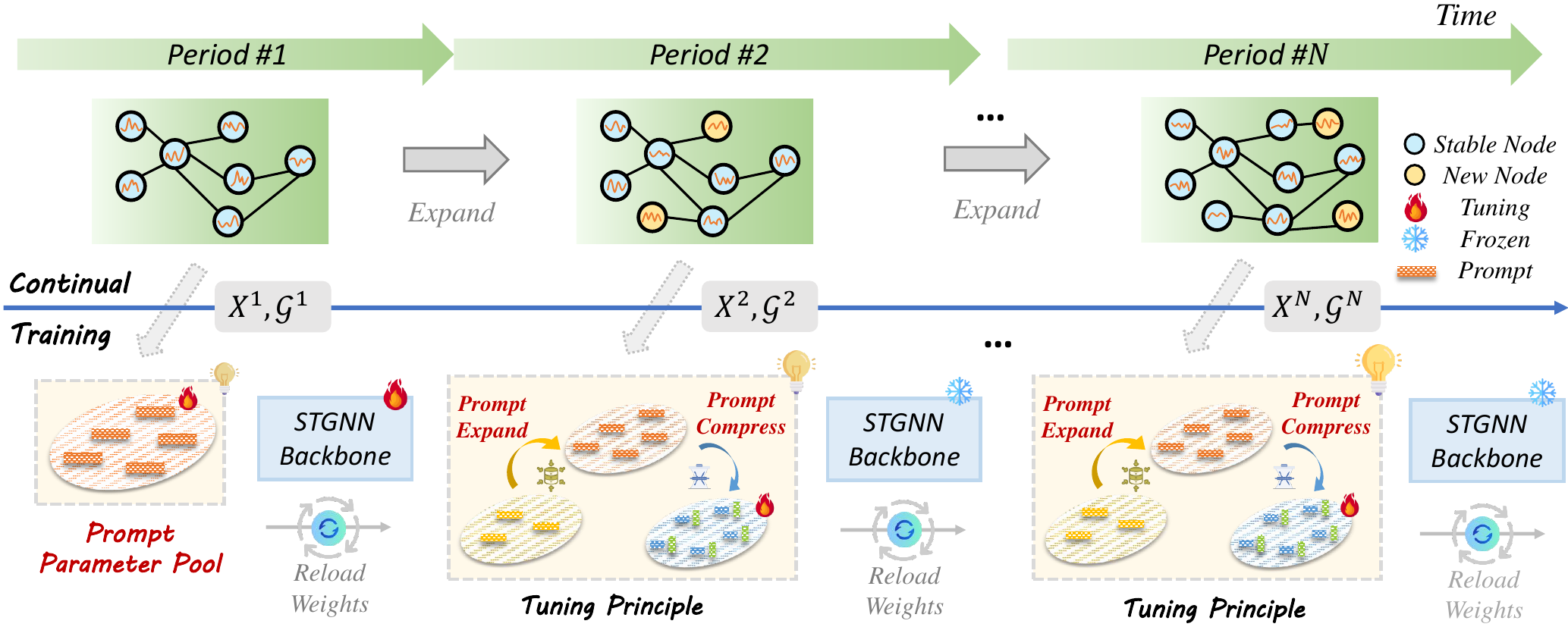}
\caption{The overall architecture of our proposed \model.} 
\label{fig.framework}
% \vspace{-6mm}
\end{center}
\end{figure}

\subsection{Expand: Heterogeneity-guided Continuous Prompt Pool Growth}

\textbf{Insight.}
% Spatio-temporal Graph Neural Networks~\citep{jin2023spatio,jin2024survey} integrate various graph learning and sequence learning operators to capture complex spatio-temporal dependencies. They have been demonstrated to be effective in spatio-temporal forecasting tasks~\citep{li2017diffusion,wu2019graph,bai2020adaptive,li2024gpt}.
% As mentioned above, in continuous prediction scenarios, fine-tuning an existing STGNN model with new data 
As mentioned above, fine-tuning an existing STGNN model with new data streams often leads to catastrophic forgetting~\citep{van2024continual}. While previous methods have proposed some mitigative strategies~\citep{chen2017traffic,wang2023knowledge,wang2023pattern}, these solutions are not entirely avoidable. \textit{A straightforward solution is to isolate parameters, freeze the old model, and dynamically adjust the network structure to incorporate adaptable learning parameters}. Recently, there has been an increasingly common consensus in spatio-temporal forecasting to introduce node-specific trainable parameters as spatial identifiers to achieve higher performance~\citep{shao2022spatial,liu2023spatio,dong2024heterogeneity,yeh2024rpmixer}. Although some empirical evidence~\citep{shao2023exploring,cini2024taming} supports their predictive performance in static scenarios, there has been no root analysis to explain why they are useful, when they are applicable, and in what contexts they are most suitable. However, we find this closely aligns with our motivation and extend it to the continual spatio-temporal forecasting setting by providing a reasonable explanation from the perspective of heterogeneity to address these questions. Specifically, spatio-temporal data generally exhibit two characteristics: \textit{correlation} and \textit{heterogeneity}~\citep{geetha2008survey,wang2020deep}. The former is naturally captured by various STGNNs, as they automatically aggregate local spatial and temporal information. However, given the message-passing mechanism of STGNNs, the latter is clearly not captured. Therefore, we argue that the introduction of node prompt parameter pool likely enhances the model's ability to capture heterogeneity by expanding the expressiveness of the feature space.

% We address these questions by providing a reasonable explanation from the perspective of heterogeneity and extending it to the continuous spatio-temporal forecast setting.

\begin{wrapfigure}{r}{7cm}
\begin{center}
\vspace{-4mm}
\includegraphics[width=1.0\linewidth]{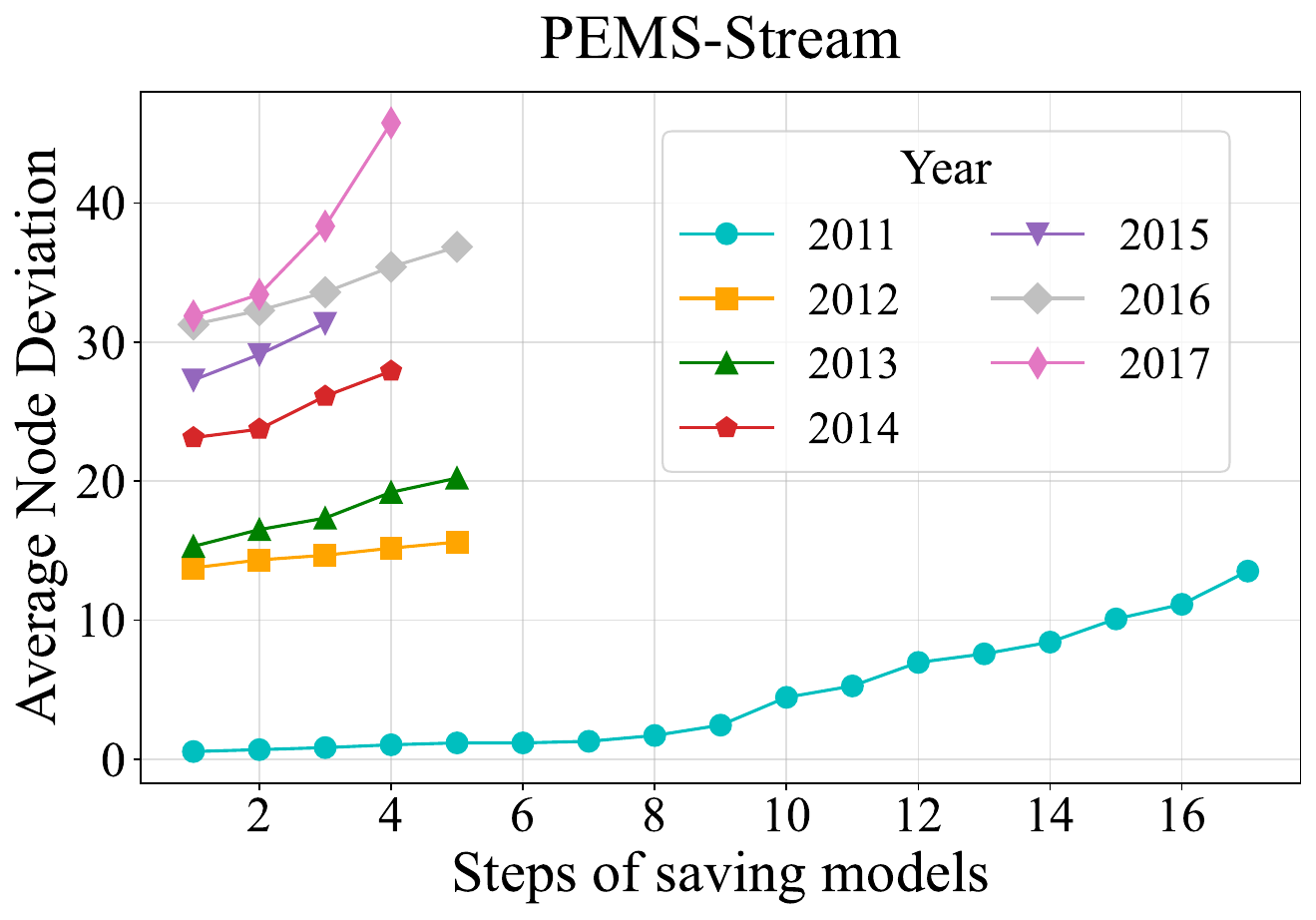}
\setlength{\abovecaptionskip}{-0.25cm}
\caption{Heterogeneity measurement.} 
\vspace{-2mm}
\label{fig.expand_study}
\end{center}
% \vspace{-3mm}
\end{wrapfigure} 
\textbf{Empirical Observation.} To quantitatively analyze heterogeneity, we consider the dispersion of node feature vectors in the feature space~\citep{fan2024neural}. We first define the \textit{Average Node Deviation} ($D(\cdot)$) metric as: 
$
\footnotesize
D(X)=\frac{1}{n\times n}\sum_{i=1}^{n}\sum_{j=1}^{n}{\sum_{k=1}^{d}(X_{ik}-X_{jk})^2},
$
where $X \in \mathbb{R}^{n \times d}$ represent the feature matrix composed of $n$ node vectors, each with $d$ dimensions. This metric quantifies the degree of dispersion between pairs of node vectors within the feature matrix, reflecting the ability to express heterogeneity. We use this indicator to plot the dispersion degree of the feature matrix for the pems-stream dataset across different periods as node prompt parameters are injected during the learning process. As shown in Figure~\ref{fig.expand_study}, two phenomena are clearly observed: \ding{182} Within the same period, the dispersion of the node feature space continuously expands throughout the learning process, reflecting the increasing ability of prompt parameters to represent heterogeneity. \ding{183} Across different periods, the dispersion of the feature space in the current period expands further compared to the previous period, showing the continuous capture ability of the prompt parameter for heterogeneity.

\textbf{Theoretical Analysis.} Below, we provide a theoretical analysis of the above empirical results.
\begin{proposition}
For an original node input feature matrix $X = [x_1, \cdots, x_n] \in \mathbb{R}^{n \times d}$, we introduce a node prompt parameter matrix $P = [p_1, \cdots, p_n] \in \mathbb{R}^{n \times d}$. Through a spatio-temporal learning function $f_\theta$ with invariance, a new feature matrix $X^{\theta} = f(\theta; X, P)$ is obtained, satisfying:
$$
D(X^\theta)-D(X)=2(\frac1{n}\sum_{i=1}^n\|p_i^\theta\|^2-\|\mu_p^\theta\|^2)\geq0,
$$
where $P^{\theta} = [p^{\theta}_1, \cdots, p^{\theta}_n] \in \mathbb{R}^{n \times d}$  represents the optimized prompt parameter matrix, and $\mu^\theta_p = \frac{1}{n} \sum_{i=1}^{n} p^\theta_i$ is the mean vector of the parameter matrix.
\end{proposition}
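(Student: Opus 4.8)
The plan is to reduce the entire statement to one elementary identity for the dispersion functional. For any matrix $Y=[y_1,\dots,y_n]\in\mathbb{R}^{n\times d}$ with row mean $\mu_y=\frac1n\sum_{i=1}^n y_i$, I would first show
\[
D(Y)=\frac{1}{n^2}\sum_{i=1}^n\sum_{j=1}^n\|y_i-y_j\|^2=2\left(\frac1n\sum_{i=1}^n\|y_i\|^2-\|\mu_y\|^2\right),
\]
by expanding $\|y_i-y_j\|^2=\|y_i\|^2-2\langle y_i,y_j\rangle+\|y_j\|^2$, summing over $i,j$, and collapsing the cross term via $\sum_{i,j}\langle y_i,y_j\rangle=\|\sum_i y_i\|^2=n^2\|\mu_y\|^2$. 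This single identity does most of the work: applied to $Y=P^\theta$ it identifies the claimed right-hand side as exactly $D(P^\theta)$, and it makes the inequality $D(P^\theta)\ge0$ immediate, since $\frac1n\sum_i\|y_i\|^2\ge\|\mu_y\|^2$ is Jensen's inequality for the convex map $y\mapsto\|y\|^2$ (equivalently Cauchy--Schwarz).

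With the identity and the non-negativity in hand, the statement is equivalent to the \emph{equality} $D(X^\theta)-D(X)=D(P^\theta)$. Here I would use the invariance hypothesis on $f_\theta$ to model the processed features as a superposition of a data-driven component and the optimized prompt. If the prompt is appended in fresh feature coordinates, $X^\theta=[\,g_\theta(X)\,\Vert\,P^\theta\,]$, then pairwise distances split blockwise with no interaction term and $D(X^\theta)=D(g_\theta(X))+D(P^\theta)$ exactly; if instead the prompt enters additively, $X^\theta=g_\theta(X)+P^\theta$, the dispersion identity gives
\[
D(X^\theta)=D(g_\theta(X))+D(P^\theta)+\frac{2}{n^2}\sum_{i=1}^n\sum_{j=1}^n\langle g_\theta(x_i)-g_\theta(x_j),\,p_i^\theta-p_j^\theta\rangle.
\]
In either reading the invariance of $f_\theta$ (a norm- and distance-preserving action on each node vector) supplies $D(g_\theta(X))=D(X)$, so the target reduces to showing the interaction term is absent.

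The main obstacle is therefore controlling that interaction term, and this is exactly where the precise meaning of ``invariance'' must be nailed down. Rearranging it with $\sum_i g_\theta(x_i)=n\bar g$ and $\sum_i p_i^\theta=n\mu_p^\theta$ shows it equals $\frac4n\big(\sum_i\langle g_\theta(x_i),p_i^\theta\rangle-n\langle\bar g,\mu_p^\theta\rangle\big)$, i.e.\ it is proportional to the empirical cross-covariance between the processed data and the learned prompts. I expect the proof to close by arguing this cross-covariance vanishes: structurally in the concatenation reading (the prompt occupies a subspace orthogonal to the data block), or via an optimality/decorrelation property of $f_\theta$ at the optimum in the additive reading, under which the prompt is trained to encode precisely the residual heterogeneity not already expressed by the data component. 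I would make this orthogonality rigorous directly from the paper's definition of $f_\theta$; once it holds, combining $D(g_\theta(X))=D(X)$ with the vanishing interaction term yields $D(X^\theta)-D(X)=D(P^\theta)$, and the dispersion identity rewrites this as the stated formula with $\ge0$ following from Jensen. The delicate point to verify is thus which hypothesis on $f_\theta$ simultaneously legitimizes the additive/blockwise decomposition and forces the cross term to zero.
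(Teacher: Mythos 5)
Your proposal follows essentially the same route as the paper's proof: the paper also reads the invariance hypothesis as the additive model $X^\theta = X + P^\theta$ (with the data component passed through unchanged, i.e.\ your $g_\theta(X)=X$), applies the identity $\sum_{i,j}\|y_i-y_j\|^2 = 2n\sum_i\|y_i\|^2 - 2\|\sum_i y_i\|^2$ to both $D(X^\theta)$ and $D(X)$, and closes the inequality with Cauchy--Schwarz/Jensen exactly as you do. The one substantive difference is instructive: after expanding, the paper arrives at the cross term $\frac{4}{n}\sum_i x_i^\top p_i^\theta - 4\mu^\top\mu_p^\theta$ --- precisely the empirical cross-covariance you isolate --- and eliminates it by rewriting $\sum_i x_i^\top p_i^\theta$ as $\frac{1}{n}\left(\sum_i x_i\right)^\top\left(\sum_i p_i^\theta\right) = n\,\mu^\top\mu_p^\theta$. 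That substitution is not a valid identity in general; it holds only when the prompts are empirically decorrelated from the inputs. So the ``delicate point'' you flag at the end is not an artifact of your reading: it is a genuine unjustified step in the paper's own argument, and your suggestion that some explicit orthogonality or decorrelation hypothesis on $f_\theta$ (or a blockwise concatenation reading, where the cross term vanishes structurally) is needed to make the equality rigorous is correct. As written, only the weaker statement $D(X^\theta)-D(X) = D(P^\theta)/1 + \frac{4}{n}\sum_i\langle x_i-\mu,\,p_i^\theta-\mu_p^\theta\rangle$ follows, and the claimed sign is not guaranteed without controlling that covariance.
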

\begin{proof}
% The key to this proof is leveraging xx.. 
For more details, refer to the supplementary materials in appendix~\ref{proof_heterogeneity}.
\end{proof}

\begin{tcolorbox}[top=0.5pt, bottom=0.5pt, left=0.5pt, right=0.5pt]
  \textit{\textbf{Tuning Principle \uppercase\expandafter{\romannumeral1}:}}~\textit{Prompt Parameter Pool Can Continuously Adapt to Heterogeneity Property.}
\end{tcolorbox}

\textbf{Implementation Details.} Based on the above analysis, we present the implementation details for expand process in continuous spatio-temporal forecasting scenarios. Specifically, we continuously maintain a prompt parameter pool $\mathcal{P}$. For the initial static stage, we provide each node with a learnable parameter vector, and the matrix $P^{(1)}$ of all such vectors is added to the parameter pool $\mathcal{P}=[P^{(1)}]$. Follow the Occam's razor, we adopt a simple yet effective fusion method, where the prompt pool and the corresponding input node features are added element-wise. The prompt parameter pool $\mathcal{P}$ is then trained together with the base STGNN model. For subsequent period $\tau$, we only provide prompt parameter vectors for newly added nodes, and the resulting matrix $P^{(\tau)}$ is added to the prompt pool $\mathcal{P}=[P^{(1)},P^{(2)},\cdots,P^{(\tau-1)}]$. As we analyzed, we \textit{freeze} the STGNN backbone and \textit{only tuning the prompt pool} $\mathcal{P}$, effectively reducing computational costs and accelerate training.

\subsection{Compress: Low-rank-guided Continuous Prompt Pool Reduction}

\textbf{Insight.}
While node-customized prompt parameter pools are highly effective, an unavoidable challenge arises in our scenario of continuous spatio-temporal forecasting: the number of prompt parameters continuously increases with the addition of new nodes across consecutive periods, leading to parameter inflation. Despite the existence of numerous well-established studies that enhance the efficiency of spatio-temporal prediction~\citep{bahadori2014fast,yu2015accelerated,chen2023discovering,ruan2024low} and imputation~\citep{chen2020nonconvex,chen2021low,nie2024imputeformer} tasks using techniques such as compressed sensing and matrix / tensor decomposition, these study typically focus solely on the original spatio-temporal data. \textit{An intuitive solution is to similarly apply low-rank matrix approximations to the prompt learning parameter pool, thereby reducing the number of learnable parameters while maintaining performance}. However, for the prompt learning parameter pool, it remains to be validated whether it exhibits redundancy characteristics akin to spatio-temporal data and whether these properties hold in the continuous spatio-temporal forecasting setting.

\begin{wrapfigure}{r}{7cm}
\begin{center}
\vspace{-4mm}
\includegraphics[width=1.0\linewidth]{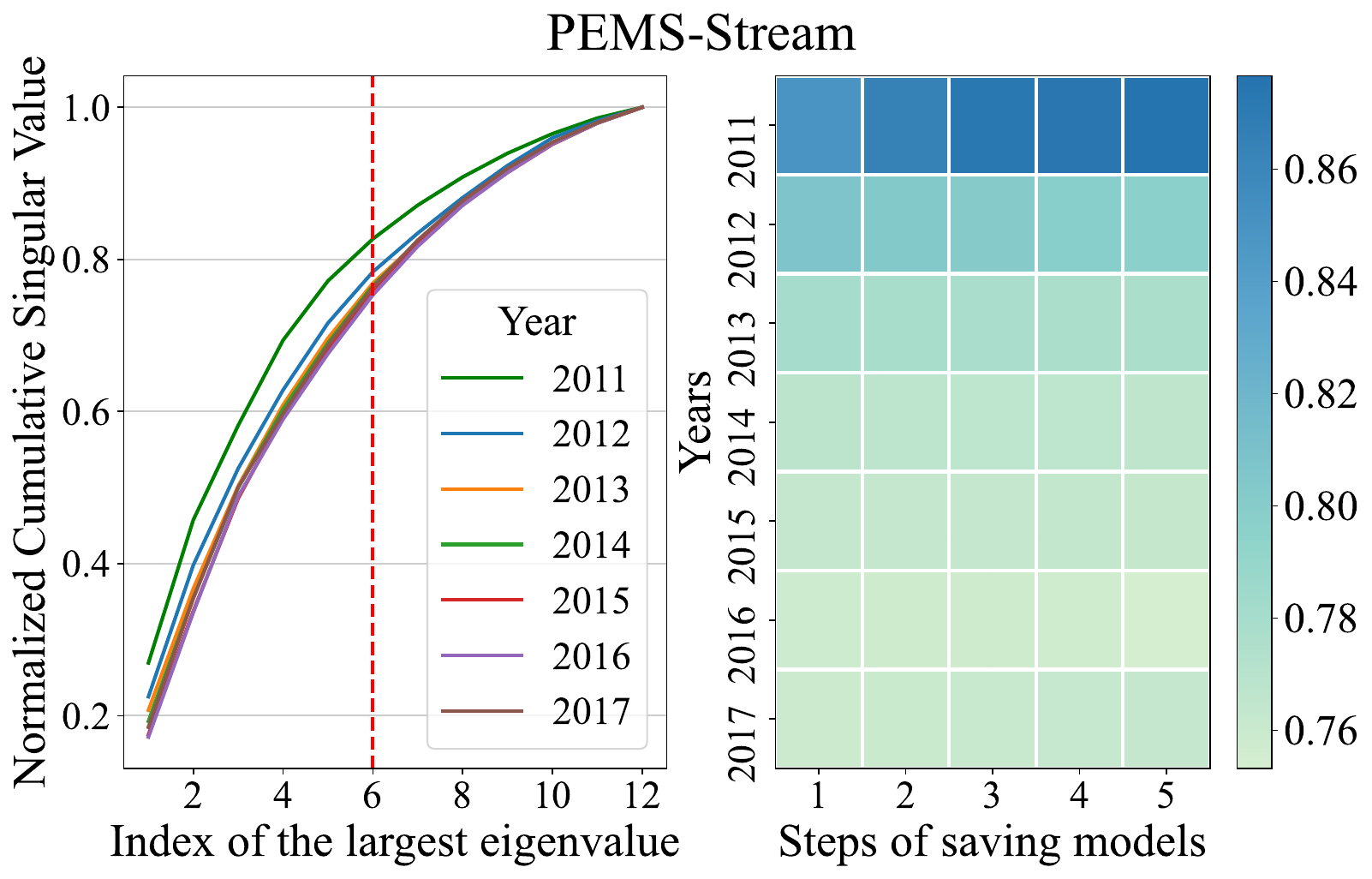}
\setlength{\abovecaptionskip}{-0.25cm}
\caption{Low-rank measurement.} 
\vspace{-2mm}
\label{fig.compress_study}
\end{center}
% \vspace{-2mm}
\end{wrapfigure}
\textbf{Empirical Observation.}
To explore redundancy, we conduct a spectral analysis of the prompt parameter pool $\mathcal{P}$. Specifically, for the models optimized annually on the PEMS-Stream dataset, we first apply singular value decomposition to the extended prompt parameter pool introduced in the previous section and plot the normalized cumulative singular values for different years, as shown in Figure~\ref{fig.compress_study} left. It can be observed that \ding{182} all years exhibit a clear long-tail spectral distribution, indicating that most information from the parameter matrix $\mathcal{P}$ can be recovered from the first few largest singular values. In Figure~\ref{fig.compress_study} right, we also present a heatmap of the normalized cumulative singular values at the sixth largest singular value for different years at different steps, revealing that, \ding{183} despite some variations across years, the overall processes for all years maintain a high concentration of information $( >0.75 )$, suggesting a low-rank property for $\mathcal{P}$.

\textbf{Theoretical Analysis.}
Below, we provide a theoretical analysis of the above empirical results.
\begin{proposition}
Given the node prompt parameter matrix $P \in \mathbb{R}^{n \times d}$, there will always be two matrices $A\in \mathbb{R}^{n \times k}$ and $B\in \mathbb{R}^{k \times d}$ such that $P$ can be approximated as $AB$ when the nodes $n$ grow large, and satisfy the following probability inequality:
$$
\Pr\left(\|P - AB\|_F \leq \epsilon \|P\|_F\right) \geq 1 - o(1)~~\text{and}~~k=\mathcal{O}\left(\log (\min(n, d))\right)
$$
where $o(1)$ represents a term that becomes negligible even as $n$ grows large.
\end{proposition}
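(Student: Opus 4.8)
The plan is to reduce the statement to a spectral-tail estimate via the singular value decomposition, and then supply the probabilistic $k = \mathcal{O}(\log(\min(n,d)))$ bound through a concentration argument that formalizes the empirically observed long-tail spectrum. Concretely, I would first write the SVD $P = U\Sigma V^\top$ with singular values $\sigma_1 \ge \cdots \ge \sigma_r \ge 0$ and $r = \mathrm{rank}(P) \le \min(n,d)$, and take the canonical factors $A = U_k\Sigma_k \in \R^{n\times k}$ and $B = V_k^\top \in \R^{k\times d}$, where the subscript $k$ denotes truncation to the top $k$ singular triples. By the Eckart--Young--Mirsky theorem, $AB$ is the best rank-$k$ approximation and $\|P-AB\|_F^2 = \sum_{i>k}\sigma_i^2$, while $\|P\|_F^2 = \sum_{i=1}^r \sigma_i^2$. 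Hence the target inequality $\|P-AB\|_F \le \epsilon\|P\|_F$ is exactly equivalent to the normalized spectral-tail bound $\sum_{i>k}\sigma_i^2 \le \epsilon^2\sum_{i=1}^r\sigma_i^2$, i.e. the top-$k$ singular subspace retains at least a $(1-\epsilon^2)$ fraction of the Frobenius energy. This is precisely the quantity plotted in Figure~\ref{fig.compress_study}, so the remaining work is to convert that empirical observation into a probabilistic guarantee.

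Second, I would introduce a random model for the prompt pool that matches the streaming setting: as new nodes arrive, their prompt rows $p_1,\dots,p_n$ are treated as independent draws (after centering) from a fixed $d$-dimensional distribution with population covariance $\Sigma_P = \E[pp^\top]$, whose eigenvalues $\lambda_1 \ge \lambda_2 \ge \cdots$ inherit the rapid decay seen empirically. The key link is that $\tfrac1n P^\top P$ is the empirical covariance, so its eigenvalues are $\sigma_i^2/n$; controlling the spectral tail of $P$ is therefore equivalent to controlling the tail of the sample covariance. I would invoke a matrix concentration inequality (matrix Bernstein) together with Weyl's eigenvalue perturbation bound to show that, as $n\to\infty$, the empirical cumulative spectrum converges uniformly to the population one with deviation $o(1)$. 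The logarithmic factor then enters through the ambient dimension: to capture a $(1-\epsilon^2)$ energy fraction under geometric eigenvalue decay $\lambda_i \le \lambda_1\rho^{\,i-1}$ one needs $k$ with $\rho^{2k}\lesssim \epsilon^2$, but making this hold uniformly over all directions of the $\min(n,d)$-dimensional row space requires a union bound over a net of that sphere, whose log-cardinality scales with $\min(n,d)$, yielding an admissible $k = \mathcal{O}(\log(\min(n,d)))$ with failure probability absorbed into the $o(1)$ term.

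The main obstacle is exactly this third step: rigorously producing the $k = \mathcal{O}(\log(\min(n,d)))$ scaling rather than a bound that merely depends on $\log(1/\epsilon)$ or on a fixed stable rank. Geometric spectral decay alone yields $k = \mathcal{O}(\log(1/\epsilon))$ with no dimensional dependence, so the logarithm in $\min(n,d)$ must come from the uniformity requirement --- controlling the tail energy simultaneously over all directions of the $\min(n,d)$-dimensional subspace --- which is where the covering-number argument contributes a $\log$ of the ambient rank. I would therefore state the decay assumption (or an equivalent bounded effective-dimension hypothesis) explicitly, since it is what makes the $o(1)$ probability and the logarithmic rank simultaneously attainable; alternatively, a randomized range-finder construction, namely a Gaussian sketch $Y = P\Omega$ with $A = \mathrm{orth}(Y)$ and $B = A^\top P$, combined with the Halko--Martinsson--Tropp deviation bound, gives the same $AB$ guarantee and may package the probability and the oversampling-driven $\log$ factor more cleanly. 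Finally, I would check that the chosen decay hypothesis is consistent with the heterogeneity-driven growth established in the preceding proposition, so that the expand and compress principles remain compatible.
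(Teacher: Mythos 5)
Your route is genuinely different from the paper's. The paper never touches the SVD of $P$: it draws a Gaussian sketch $\Phi \in \mathbb{R}^{k\times n}$ with i.i.d.\ entries $\mathcal{N}(0,1/k)$, sets $A=\Phi^\top$ and $B=\Phi P$, bounds $\|P-AB\|_F \le \|I_n-\Phi^\top\Phi\|_2\,\|P\|_F$, and then applies a matrix concentration inequality of the form $\Pr\left(\|\Phi^\top\Phi - I_n\|_2 \ge \epsilon\right) \le 2n\,e^{-k\epsilon^2/2}$; the choice $k = \lceil 4\log(\min(n,d))/\epsilon^2\rceil$ is made purely so that the dimensional prefactor $2n$ in this tail bound is beaten and the failure probability tends to zero. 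So in the paper the randomness lives in the construction of $A$ and $B$, the factorization is oblivious to $P$ (no assumption on its spectrum), and the $\log(\min(n,d))$ enters through the failure probability, not because rank $\log(\min(n,d))$ is needed to capture the energy of $P$. Your Eckart--Young construction is data-dependent and optimal for each fixed $k$, which is aesthetically appealing and matches the empirical spectra more directly, but it makes the probabilistic clause meaningless unless you randomize the data itself, which is where your plan runs into trouble.

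That relocation of the randomness creates two concrete gaps. First, you must impose an extra hypothesis --- fast decay of the population covariance spectrum of the prompt rows --- that does not appear in the proposition, so you would be proving a conditional variant of the statement rather than the statement itself. Second, and more seriously, your mechanism for producing $k=\mathcal{O}(\log(\min(n,d)))$ does not hold up: under geometric eigenvalue decay, Eckart--Young already yields $k=\mathcal{O}(\log(1/\epsilon))$ with no dimensional dependence, and the $\varepsilon$-net/union-bound device you invoke contributes a logarithm of the net cardinality to the \emph{exponent of the failure probability}, not to the \emph{rank} required to retain a fixed energy fraction. Nothing in your sketch forces a $\log(\min(n,d))$ term into $k$; you acknowledge this is the main obstacle, and indeed it is not an obstacle that the covering argument can clear. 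The cleanest repair is the fork you mention in passing at the end: the randomized range-finder / Gaussian-sketch construction, which is essentially the paper's proof --- there the probability is over the sketch, no spectral assumption on $P$ is needed, and the $\log(\min(n,d))$ has an unambiguous origin in the concentration bound.
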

% \vspace{-4mm}
\begin{proof}
For more details, refer to the supplementary materials in appendix~\ref{proof_lowrank}.
\end{proof}
% \vspace{-3mm}
\begin{tcolorbox}[top=0.5pt, bottom=0.5pt, left=0.5pt, right=0.5pt]
  \textit{\textbf{Tuning Principle \uppercase\expandafter{\romannumeral2}:}}~\textit{Prompt Parameter Pool Can Continuously Satisfy the Low-rank Property.}
\end{tcolorbox}

\textbf{Implementation Details.}
Formally, we present the implementation details for compress process in continuous spatio-temporal forecasting scenarios based on the aforementioned analysis. Specifically, for the initial static stage, we approximate the original prompt parameter $P^{(1)}$ using the product of the subspace parameter matrix $A^{(1)}$ and the adjustment parameter matrix $B$. For subsequent periods $\tau$, we provide only the subspace parameter matrix $A^{(\tau)}$ for the newly added node vectors, approximating the prompt parameter $P^{(\tau)}$ through the product with the adjustment parameter matrix $B$. As analyzed, the \textit{dimensionality of the subspace parameter} matrix $A$ is \textit{significantly smaller} than that of the \textit{prompt parameter} $P$, while the number of parameters in the adjustment matrix $B$ remains constant; thus, we effectively mitigate the inflation issue.

\section{Experiments}

In this section, we conduct extensive experiments to investigate the following research questions:

\begin{itemize}[leftmargin=2.5mm,parsep=2pt]
    \item \textbf{RQ1:} Can \model outperform previous methods in accuracy across various tasks? \textit{(\textbf{Effectiveness})}
    \item \textbf{RQ2:} Can \model have a consistent improvement on various types of STGNNs? \textit{(\textbf{Universality})}
    \item \textbf{RQ3:} How efficient is \model compared to different methods during the training phase? \textit{(\textbf{Efficiency})}
    \item \textbf{RQ4:} How many parameters does \model require tuning compared to baselines? \textit{(\textbf{Lightweight})}
    \item \textbf{RQ5:} How does \model compare to other common prompt-adaptive learning method? \textit{(\textbf{Simplicity})}
\end{itemize}

\subsection{Experimental setup}

\textbf{Dataset and Evaluation Protocol.}
We use real-world spatio-temporal graph datasets from three domains: transportation, weather, and energy, encompassing common streaming spatio-temporal forecasting scenarios. The transportation dataset, \textit{\textbf{PEMS-Stream}}, is derived from benchmark datasets in previous research~\citep{chen2017traffic}, covering dynamic traffic flow in Northern California from 2011 to 2017 across seven periods. The weather dataset, \textit{\textbf{Air-Stream}}, originates from the real-time urban air quality platform of the Chinese Environmental Monitoring Center~\footnote{https://air.cnemc.cn:18007/}, capturing dynamic air quality indicators at monitoring stations across various regions of China from 2016 to 2019 over four periods. The energy dataset, \textit{\textbf{Energy-Stream}}, comes from a spatial dynamic wind power forecasting dataset provided by a power company during the KDD Cup competition~\citep{zhou2022sdwpf}, containing monitoring metrics for wind farms over a span of 245 days across four periods. During the experiments, each dataset's temporal dimension is split into training, validation, and test sets in a 6:2:2 ratio for each period, employing an early stopping mechanism. According to the established protocol~\citep{chen2017traffic}, we use the past 12 steps to predict the next 3, 6, 12 steps, and the mean value. Evaluation metrics include \textit{Mean Absolute Error (MAE)}, \textit{Root Mean Square Error (RMSE)}, and \textit{Mean Absolute Percentage Error (MAPE)} averaged over all periods. Further details regarding the datasets and evaluation metrics are available in Appendix~\ref{appendix_datasets}.

\textbf{Baseline and Parameter Setting.}
Following the default settings~\citep{chen2017traffic,wang2023knowledge,lee2024continual}, we employ the same STGNN as the backbone network and consider the following baseline methods for comparison:
\begin{itemize}[leftmargin=*,itemsep=0.2em]
    \item \textit{\textbf{Pretrain-ST}}: For each dataset, we train the STGNN backbone using only the spatio-temporal graph data from the first period and directly use this network to predict results on the test sets in subsequent periods.
    \item \textit{\textbf{Retrain-ST}}: For each dataset, we train a new backbone network for the spatio-temporal graph data of each period and use the corresponding network to predict results on the test set of the current.
    \item \textit{Continual-ST}: For each dataset, we iteratively train a backbone network in an online manner, where the model weights from the previous period serve as initialization for the current period.
    \begin{itemize}[leftmargin=*,itemsep=0.3em]
        \item[\ddag] \textit{\textbf{Continual-ST-AN}}: Train on the complete node data of the current period's spatio-temporal graph, tuning the entire model with the model trained from the previous year as initialization.
        \item[\ddag] \textit{\textbf{Continual-ST-NN}}: Train on the newly added node data of the current period's spatio-temporal graph, tuning the entire model with the model trained from the previous year as initialization. \textit{\textbf{TFMoE}}~\citep{lee2024continual} improved this using a mixture of expert models technique.
        \item[\ddag] \textit{Continual-ST-MN}: Train on the mixed node data (new nodes + some old nodes) of the current period's spatio-temporal graph, tuning the entire model with the model trained from the previous year as initialization. Existing methods typically focus on this aspect, including \textit{\textbf{TrafficStream}}~\citep{chen2017traffic}, \textit{\textbf{PECPM}}~\citep{wang2023pattern}, and \textit{\textbf{STKEC}}~\citep{wang2023knowledge}.
    \end{itemize}
\end{itemize}

For both the baseline method and our method, we set parameters uniformly according to the recommendations in previous paper~\citep{chen2017traffic} to ensure fair comparison. Our only hyper-parameter $k$ is set to 6. We repeated each experiment five times and report the mean and standard deviation (indicated by \textcolor{gray}{\text{gray ±}}) of all methods. More details about baseline settings, see Appendix~\ref{appendix_baseline}.

\subsection{Effectiveness Study (RQ1)}

\begin{wraptable}{r}{0.52\textwidth}
    \centering
    \vspace{-4mm}
    \renewcommand{\arraystretch}{1.4}
    \caption{Performance comparison of the improved method with \model on \textit{PEMS-Stream} benchmark.}
    \resizebox{0.52\textwidth}{!}{ % 调整表格大小
    \begin{tabular}{ccccc}
    \toprule
    Model & Avg. @ MAE & Avg. @ RMSE & Simplicity & Lightweight \\
    \cmidrule(lr){1-1}\cmidrule(lr){2-3}\cmidrule(lr){4-5}
    \textit{\textbf{TrafficStream}} & 14.22\textcolor{gray}{\text{\scriptsize±0.13}} & 23.16\textcolor{gray}{\text{\scriptsize±0.27}} & \rxmark & \rxmark \\
    \textit{\textbf{STKEC}} & \secondres{14.14}\textcolor{gray}{\text{\scriptsize±0.04}} & \secondres{23.04}\textcolor{gray}{\text{\scriptsize±0.07}} & \rxmark & \rxmark \\
    \textit{\textbf{PECMP}} & 14.85 * & 24.62 * & \rxmark & \rxmark \\
    \textit{\textbf{TFMoE}} & 14.18 * & 23.54 * & \rxmark & \rxmark \\
    \midrule
    \cellcolor{gray!15} \model & \cellcolor{gray!15} \firstres{13.53}\textcolor{gray}{\text{\scriptsize±0.06}} & \cellcolor{gray!15} \firstres{21.77}\textcolor{gray}{\text{\scriptsize±0.10}} & \cellcolor{gray!15} \gcmark & \cellcolor{gray!15} \gcmark \\
    \bottomrule
    \end{tabular}
    } % 结束resizebox
    \label{tab:performance_avg}
    % \vspace{-4mm}
\end{wraptable}
 
\textbf{Overall Performance.}
We report a comparison between \model and typical schemes (including representative improved methods~\footnote{Notably,  the core code of \textit{{STKEC}} is not available, so we carefully reproduce and report average results.}) in Table~\ref{overall_performance}, where the best results are highlighted in \firstres{bold pink} and the second-best results in \secondres{underlined blue}. $\Delta$ indicates the reduction of MAE compared to the second-best result, or the increase of other results relative to the second best result. Moreover, due to the unavailability of official source code for \textit{{PECMP}} and \textit{{TFMoE}}, along with the more complex backbone network used by \textit{{TFMoE}}, the comparisons may be unfair. Nonetheless, we also include comparable reported values, as shown in Table~\ref{tab:performance_avg}. Based on the results, we observe the following:

\begin{table*}[t!]
% \vspace{-4mm}
\caption{Comparison of the overall performance of the classical scheme and \model.}
\label{overall_performance}
% \vspace{-2mm}
\centering
\setlength{\tabcolsep}{0.6mm}
\renewcommand{\arraystretch}{1.5}
\begin{small}
\scalebox{0.65}{
\begin{tabular}{cccccccccccccc}
\toprule

\multicolumn{2}{c}{\textbf{Datasets}}&\multicolumn{4}{c}{\textit{\textbf{Air-Stream}~~(\text{1087}~$\rightarrow$~$\cdots$~$\rightarrow$~\text{1202})}}&\multicolumn{4}{c}{\textit{\textbf{PEMS-Stream}~~(\text{655}~$\rightarrow$~$\cdots$~$\rightarrow$~\text{871})}}&\multicolumn{4}{c}{\textit{\textbf{Energy-Stream}~~(\text{103}~$\rightarrow$~$\cdots$~$\rightarrow$~\text{134})}} \\

\cmidrule(lr){1-2}\cmidrule(lr){3-6}\cmidrule(lr){7-10}\cmidrule(lr){11-14}

\textbf{Method} & \textbf{Metric} & 3 & 6 & 12 & \textbf{Avg.} & 3 & 6 & 12 & \textbf{Avg.} & 3 & 6 & 12 & \textbf{Avg.} \\

\midrule
% \cmidrule(lr){1-2}\cmidrule(lr){3-6}\cmidrule(lr){7-10}\cmidrule(lr){11-14}

\multirow{3}{*}{\textit{\textbf{Retrain-ST}}}
& MAE
& $18.59\textcolor{gray}{\text{\scriptsize±0.39}}$ & $21.53\textcolor{gray}{\text{\scriptsize±0.29}}$
& $24.83\textcolor{gray}{\text{\scriptsize±0.26}}$ & $21.33\textcolor{gray}{\text{\scriptsize±0.29}}$

& $12.96\textcolor{gray}{\text{\scriptsize±0.14}}$ & $14.06\textcolor{gray}{\text{\scriptsize±0.10}}$
& $16.36\textcolor{gray}{\text{\scriptsize±0.11}}$ & $14.24\textcolor{gray}{\text{\scriptsize±0.12}}$ 

& $5.56\textcolor{gray}{\text{\scriptsize±0.14}}$ & $5.46\textcolor{gray}{\text{\scriptsize±0.12}}$  
& $5.45\textcolor{gray}{\text{\scriptsize±0.09}}$ & $5.48\textcolor{gray}{\text{\scriptsize±0.12}}$ \\

& RMSE    
& $29.20\textcolor{gray}{\text{\scriptsize±0.70}}$ & $34.31\textcolor{gray}{\text{\scriptsize±0.59}}$
& $39.61\textcolor{gray}{\text{\scriptsize±0.57}}$ & $33.77\textcolor{gray}{\text{\scriptsize±0.62}}$

& $20.88\textcolor{gray}{\text{\scriptsize±0.17}}$ & $22.96\textcolor{gray}{\text{\scriptsize±0.15}}$
& $26.95\textcolor{gray}{\text{\scriptsize±0.19}}$ & $23.20\textcolor{gray}{\text{\scriptsize±0.16}}$ 

& $5.75\textcolor{gray}{\text{\scriptsize±0.12}}$ & $5.70\textcolor{gray}{\text{\scriptsize±0.11}}$  
& $5.80\textcolor{gray}{\text{\scriptsize±0.09}}$ & $5.72\textcolor{gray}{\text{\scriptsize±0.11}}$  \\

& MAPE (\%)
& $23.72\textcolor{gray}{\text{\scriptsize±0.88}}$ & $27.67\textcolor{gray}{\text{\scriptsize±0.68}}$
& $32.50\textcolor{gray}{\text{\scriptsize±0.46}}$ & $27.54\textcolor{gray}{\text{\scriptsize±0.66}}$
 
& $18.51\textcolor{gray}{\text{\scriptsize±0.61}}$ & $19.98\textcolor{gray}{\text{\scriptsize±0.42}}$
& $23.31\textcolor{gray}{\text{\scriptsize±0.24}}$ & $20.30\textcolor{gray}{\text{\scriptsize±0.44}}$ 

& $54.35\textcolor{gray}{\text{\scriptsize±2.11}}$ & $54.61\textcolor{gray}{\text{\scriptsize±2.08}}$  
& $55.60\textcolor{gray}{\text{\scriptsize±1.55}}$ & $54.74\textcolor{gray}{\text{\scriptsize±2.06}}$ \\

& \cellcolor{cyan!5} $ \Delta $ & \cellcolor{cyan!5} + 1.58\% & \cellcolor{cyan!5} + 1.03\% & \cellcolor{cyan!5} + 0.52\% & \cellcolor{cyan!5} + 0.99\% & \cellcolor{cyan!5} + 1.25\% & \cellcolor{cyan!5} + 1.00\% & \cellcolor{cyan!5} + 1.17\% & \cellcolor{cyan!5} + 1.13\% & \cellcolor{cyan!5} + 4.70\% & \cellcolor{cyan!5} + 2.24\% & \cellcolor{cyan!5} + 0.73\% & \cellcolor{cyan!5} + 2.23\%\\

\midrule

\multirow{4}{*}{\textit{\textbf{Pretrain-ST}}}
& MAE    
& $19.58\textcolor{gray}{\text{\scriptsize±0.20}}$ & $22.72\textcolor{gray}{\text{\scriptsize±0.16}}$
& $26.00\textcolor{gray}{\text{\scriptsize±0.23}}$ & $22.44\textcolor{gray}{\text{\scriptsize±0.20}}$

& $14.13\textcolor{gray}{\text{\scriptsize±0.28}}$ & $15.17\textcolor{gray}{\text{\scriptsize±0.26}}$
& $17.35\textcolor{gray}{\text{\scriptsize±0.29}}$ & $15.33\textcolor{gray}{\text{\scriptsize±0.27}}$ 

& $10.65\textcolor{gray}{\text{\scriptsize±0.00}}$ & $10.66\textcolor{gray}{\text{\scriptsize±0.02}}$
& $17.10\textcolor{gray}{\text{\scriptsize±6.42}}$ & $17.05\textcolor{gray}{\text{\scriptsize±6.39}}$ \\

& RMSE    
& $31.46\textcolor{gray}{\text{\scriptsize±0.20}}$ & $36.78\textcolor{gray}{\text{\scriptsize±0.16}}$
& $41.96\textcolor{gray}{\text{\scriptsize±0.23}}$ & $36.15\textcolor{gray}{\text{\scriptsize±0.34}}$

& $21.77\textcolor{gray}{\text{\scriptsize±0.25}}$ & $23.79\textcolor{gray}{\text{\scriptsize±0.26}}$
& $27.73\textcolor{gray}{\text{\scriptsize±0.35}}$ & $24.04\textcolor{gray}{\text{\scriptsize±0.27}}$ 

& $10.88\textcolor{gray}{\text{\scriptsize±0.12}}$ & $10.92\textcolor{gray}{\text{\scriptsize±0.13}}$
& $11.02\textcolor{gray}{\text{\scriptsize±0.15}}$ & $10.93\textcolor{gray}{\text{\scriptsize±0.13}}$ \\

& MAPE (\%)
& $24.05\textcolor{gray}{\text{\scriptsize±1.12}}$ & $28.46\textcolor{gray}{\text{\scriptsize±1.23}}$
& $33.48\textcolor{gray}{\text{\scriptsize±1.13}}$ & $28.16\textcolor{gray}{\text{\scriptsize±1.17}}$

& $30.86\textcolor{gray}{\text{\scriptsize±3.34}}$ & $32.07\textcolor{gray}{\text{\scriptsize±3.04}}$
& $34.45\textcolor{gray}{\text{\scriptsize±3.24}}$ & $32.20\textcolor{gray}{\text{\scriptsize±3.17}}$ 

& $171.88\textcolor{gray}{\text{\scriptsize±3.79}}$ & $172.77\textcolor{gray}{\text{\scriptsize±4.25}}$
& $174.07\textcolor{gray}{\text{\scriptsize±4.81}}$ & $172.71\textcolor{gray}{\text{\scriptsize±4.12}}$ \\

& \cellcolor{cyan!5} $ \Delta $ & \cellcolor{cyan!5} + 6.99\% & \cellcolor{cyan!5} + 6.61\% & \cellcolor{cyan!5} + 5.26\% & \cellcolor{cyan!5} + 6.25\% & \cellcolor{cyan!5} + 10.39\% & \cellcolor{cyan!5} + 8.97\% & \cellcolor{cyan!5} + 7.29\% & \cellcolor{cyan!5} + 8.87\% & \cellcolor{cyan!5} + 100.56\% & \cellcolor{cyan!5} + 99.62\% & \cellcolor{cyan!5} + 216.08\% & \cellcolor{cyan!5} + 218.09\% \\

\midrule

\multirow{4}{*}{\textit{\textbf{Continual-ST-AN}}}
& MAE    
& $\secondres{18.30}\textcolor{gray}{\text{\scriptsize±0.55}}$ & $\secondres{21.31}\textcolor{gray}{\text{\scriptsize±0.49}}$
& $\secondres{24.70}\textcolor{gray}{\text{\scriptsize±0.49}}$ & $\secondres{21.12}\textcolor{gray}{\text{\scriptsize±0.51}}$

& $\secondres{12.80}\textcolor{gray}{\text{\scriptsize±0.06}}$ & $\secondres{13.92}\textcolor{gray}{\text{\scriptsize±0.05}}$ 
& $\secondres{16.17}\textcolor{gray}{\text{\scriptsize±0.10}}$ & $\secondres{14.08}\textcolor{gray}{\text{\scriptsize±0.05}}$ 

& $5.47\textcolor{gray}{\text{\scriptsize±0.08}}$ & $5.46\textcolor{gray}{\text{\scriptsize±0.09}}$
& $5.47\textcolor{gray}{\text{\scriptsize±0.11}}$ & $5.47\textcolor{gray}{\text{\scriptsize±0.08}}$ \\

& RMSE    
& $\secondres{28.54}\textcolor{gray}{\text{\scriptsize±0.64}}$ & $\secondres{33.87}\textcolor{gray}{\text{\scriptsize±0.63}}$
& $\secondres{39.33}\textcolor{gray}{\text{\scriptsize±0.68}}$ & $\secondres{33.28}\textcolor{gray}{\text{\scriptsize±0.64}}$

& $\secondres{20.66}\textcolor{gray}{\text{\scriptsize±0.06}}$ & $\secondres{22.73}\textcolor{gray}{\text{\scriptsize±0.06}}$ 
& $\secondres{26.64}\textcolor{gray}{\text{\scriptsize±0.15}}$ & $\secondres{22.96}\textcolor{gray}{\text{\scriptsize±0.06}}$

& $5.62\textcolor{gray}{\text{\scriptsize±0.05}}$ & $5.66\textcolor{gray}{\text{\scriptsize±0.06}}$
& $5.76\textcolor{gray}{\text{\scriptsize±0.07}}$ & $5.67\textcolor{gray}{\text{\scriptsize±0.05}}$ \\

& MAPE (\%)
& $\secondres{23.43}\textcolor{gray}{\text{\scriptsize±0.81}}$ & $\secondres{27.43}\textcolor{gray}{\text{\scriptsize±0.63}}$
& $\secondres{32.39}\textcolor{gray}{\text{\scriptsize±0.50}}$ & $\secondres{27.34}\textcolor{gray}{\text{\scriptsize±0.66}}$

& $\secondres{17.86}\textcolor{gray}{\text{\scriptsize±0.59}}$ & $19.37\textcolor{gray}{\text{\scriptsize±0.70}}$ 
& $22.92\textcolor{gray}{\text{\scriptsize±1.05}}$ & $19.73\textcolor{gray}{\text{\scriptsize±0.74}}$

& $52.70\textcolor{gray}{\text{\scriptsize±1.34}}$ & $53.25\textcolor{gray}{\text{\scriptsize±1.54}}$
& $54.50\textcolor{gray}{\text{\scriptsize±1.71}}$ & $53.36\textcolor{gray}{\text{\scriptsize±1.51}}$ \\

& \cellcolor{cyan!5} $ \Delta $ & \cellcolor{cyan!5} -- & \cellcolor{cyan!5} -- & \cellcolor{cyan!5} -- & \cellcolor{cyan!5} -- & \cellcolor{cyan!5} -- & \cellcolor{cyan!5} -- & \cellcolor{cyan!5} -- & \cellcolor{cyan!5} -- & \cellcolor{cyan!5} + 3.01\% & \cellcolor{cyan!5} + 2.24\% & \cellcolor{cyan!5} + 1.10\% & \cellcolor{cyan!5} + 2.05\% \\

\midrule

\multirow{4}{*}{\textit{\textbf{Continual-ST-NN}}}
& MAE    
& $19.38\textcolor{gray}{\text{\scriptsize±1.97}}$ & $22.24\textcolor{gray}{\text{\scriptsize±1.81}}$
& $25.50\textcolor{gray}{\text{\scriptsize±1.65}}$ & $22.05\textcolor{gray}{\text{\scriptsize±1.83}}$

& $14.68\textcolor{gray}{\text{\scriptsize±0.91}}$ & $16.57\textcolor{gray}{\text{\scriptsize±1.25}}$ 
& $20.64\textcolor{gray}{\text{\scriptsize±2.25}}$ & $16.95\textcolor{gray}{\text{\scriptsize±1.39}}$

& $5.51\textcolor{gray}{\text{\scriptsize±0.05}}$ & $5.50\textcolor{gray}{\text{\scriptsize±0.05}}$
& $5.49\textcolor{gray}{\text{\scriptsize±0.07}}$ & $5.50\textcolor{gray}{\text{\scriptsize±0.05}}$ \\

& RMSE    
& $29.57\textcolor{gray}{\text{\scriptsize±2.23}}$ & $34.49\textcolor{gray}{\text{\scriptsize±1.67}}$
& $39.62\textcolor{gray}{\text{\scriptsize±1.24}}$ & $33.97\textcolor{gray}{\text{\scriptsize±1.78}}$

& $24.30\textcolor{gray}{\text{\scriptsize±2.00}}$ & $28.21\textcolor{gray}{\text{\scriptsize±3.09}}$
& $36.77\textcolor{gray}{\text{\scriptsize±6.60}}$ & $29.05\textcolor{gray}{\text{\scriptsize±3.63}}$

& $5.65\textcolor{gray}{\text{\scriptsize±0.04}}$ & $5.68\textcolor{gray}{\text{\scriptsize±0.05}}$
& $5.76\textcolor{gray}{\text{\scriptsize±0.06}}$ & $5.70\textcolor{gray}{\text{\scriptsize±0.04}}$ \\

& MAPE (\%)
& $23.99\textcolor{gray}{\text{\scriptsize±2.55}}$ & $28.02\textcolor{gray}{\text{\scriptsize±2.08}}$
& $33.17\textcolor{gray}{\text{\scriptsize±1.64}}$ & $27.96\textcolor{gray}{\text{\scriptsize±2.13}}$

& $18.93\textcolor{gray}{\text{\scriptsize±0.57}}$ & $20.69\textcolor{gray}{\text{\scriptsize±0.40}}$
& $24.89\textcolor{gray}{\text{\scriptsize±0.62}}$ & $21.15\textcolor{gray}{\text{\scriptsize±0.45}}$

& $54.98\textcolor{gray}{\text{\scriptsize±1.67}}$ & $55.10\textcolor{gray}{\text{\scriptsize±1.11}}$
& $55.62\textcolor{gray}{\text{\scriptsize±0.37}}$ & $55.17\textcolor{gray}{\text{\scriptsize±1.08}}$ \\

& \cellcolor{cyan!5} $ \Delta $ & \cellcolor{cyan!5} + 5.90\% & \cellcolor{cyan!5} + 4.36\% & \cellcolor{cyan!5} + 3.23\% & \cellcolor{cyan!5} + 4.40\% & \cellcolor{cyan!5} + 14.68\% & \cellcolor{cyan!5} + 19.03\% & \cellcolor{cyan!5} + 27.64\% & \cellcolor{cyan!5} + 20.38\% & \cellcolor{cyan!5} + 3.76\% & \cellcolor{cyan!5} + 2.99\% & \cellcolor{cyan!5} + 1.47\% & \cellcolor{cyan!5} + 2.61\% \\

\midrule

\multirow{4}{*}{\textit{\textbf{TrafficStream}}}
& MAE    
& $18.66\textcolor{gray}{\text{\scriptsize±1.21}}$ & $21.59\textcolor{gray}{\text{\scriptsize±0.99}}$
& $24.90\textcolor{gray}{\text{\scriptsize±0.79}}$ & $21.39\textcolor{gray}{\text{\scriptsize±1.02}}$

& $12.89\textcolor{gray}{\text{\scriptsize±0.05}}$ & $14.03\textcolor{gray}{\text{\scriptsize±0.11}}$
& $16.39\textcolor{gray}{\text{\scriptsize±0.28}}$ & $14.22\textcolor{gray}{\text{\scriptsize±0.13}}$

& $5.58\textcolor{gray}{\text{\scriptsize±0.05}}$ & $5.57\textcolor{gray}{\text{\scriptsize±0.05}}$
& $5.58\textcolor{gray}{\text{\scriptsize±0.07}}$ & $5.57\textcolor{gray}{\text{\scriptsize±0.05}}$ \\

& RMSE    
& $29.06\textcolor{gray}{\text{\scriptsize±1.64}}$ & $34.22\textcolor{gray}{\text{\scriptsize±1.27}}$
& $39.54\textcolor{gray}{\text{\scriptsize±1.00}}$ & $33.65\textcolor{gray}{\text{\scriptsize±1.35}}$

& $20.78\textcolor{gray}{\text{\scriptsize±0.13}}$ & $22.90\textcolor{gray}{\text{\scriptsize±0.25}}$
& $26.98\textcolor{gray}{\text{\scriptsize±0.53}}$ & $23.16\textcolor{gray}{\text{\scriptsize±0.27}}$

& $5.73\textcolor{gray}{\text{\scriptsize±0.06}}$ & $5.76\textcolor{gray}{\text{\scriptsize±0.06}}$
& $5.86\textcolor{gray}{\text{\scriptsize±0.07}}$ & $5.77\textcolor{gray}{\text{\scriptsize±0.06}}$ \\

& MAPE (\%)
& $24.23\textcolor{gray}{\text{\scriptsize±1.86}}$ & $28.12\textcolor{gray}{\text{\scriptsize±1.50}}$
& $32.99\textcolor{gray}{\text{\scriptsize±1.13}}$ & $28.03\textcolor{gray}{\text{\scriptsize±1.52}}$

& $\secondres{17.86}\textcolor{gray}{\text{\scriptsize±0.41}}$ & $19.50\textcolor{gray}{\text{\scriptsize±0.86}}$
& $23.43\textcolor{gray}{\text{\scriptsize±2.25}}$ & $19.95\textcolor{gray}{\text{\scriptsize±1.02}}$ 

& $53.87\textcolor{gray}{\text{\scriptsize±1.99}}$ & $54.06\textcolor{gray}{\text{\scriptsize±1.24}}$
& $54.92\textcolor{gray}{\text{\scriptsize±0.71}}$ & $54.16\textcolor{gray}{\text{\scriptsize±1.21}}$ \\

& \cellcolor{cyan!5} $ \Delta $ & \cellcolor{cyan!5} + 1.96\% & \cellcolor{cyan!5} + 1.31\% & \cellcolor{cyan!5} + 0.80\% & \cellcolor{cyan!5} + 1.27\% & \cellcolor{cyan!5} + 0.70\% & \cellcolor{cyan!5} + 0.79\% & \cellcolor{cyan!5} + 1.36\% & \cellcolor{cyan!5} + 0.99\% & \cellcolor{cyan!5} + 5.08\% & \cellcolor{cyan!5} + 4.30\% & \cellcolor{cyan!5} + 3.14\% & \cellcolor{cyan!5} + 3.91\% \\

\midrule

\multirow{4}{*}{\textit{\textbf{STKEC}}}
& MAE    
& $19.42\textcolor{gray}{\text{\scriptsize±1.27}}$ & $22.24\textcolor{gray}{\text{\scriptsize±1.17}}$
& $25.44\textcolor{gray}{\text{\scriptsize±1.05}}$ & $22.06\textcolor{gray}{\text{\scriptsize±1.20}}$

& $12.85\textcolor{gray}{\text{\scriptsize±0.05}}$ & $13.98\textcolor{gray}{\text{\scriptsize±0.04}}$
& $16.25\textcolor{gray}{\text{\scriptsize±0.04}}$ & $14.14\textcolor{gray}{\text{\scriptsize±0.04}}$

& $\secondres{5.31}\textcolor{gray}{\text{\scriptsize±0.27}}$ & $\secondres{5.34}\textcolor{gray}{\text{\scriptsize±0.25}}$
& $\secondres{5.41}\textcolor{gray}{\text{\scriptsize±0.15}}$ & $\secondres{5.36}\textcolor{gray}{\text{\scriptsize±0.22}}$ \\

& RMSE    
& $30.28\textcolor{gray}{\text{\scriptsize±1.65}}$ & $35.09\textcolor{gray}{\text{\scriptsize±1.36}}$
& $40.11\textcolor{gray}{\text{\scriptsize±1.13}}$ & $34.61\textcolor{gray}{\text{\scriptsize±1.43}}$

& $20.73\textcolor{gray}{\text{\scriptsize±0.09}}$ & $22.81\textcolor{gray}{\text{\scriptsize±0.08}}$
& $26.73\textcolor{gray}{\text{\scriptsize±0.07}}$ & $23.04\textcolor{gray}{\text{\scriptsize±0.07}}$

& $\secondres{5.50}\textcolor{gray}{\text{\scriptsize±0.19}}$ & $\secondres{5.56}\textcolor{gray}{\text{\scriptsize±0.20}}$
& $\secondres{5.72}\textcolor{gray}{\text{\scriptsize±0.10}}$ & $\secondres{5.59}\textcolor{gray}{\text{\scriptsize±0.17}}$ \\

& MAPE (\%)
& $25.21\textcolor{gray}{\text{\scriptsize±2.69}}$ & $28.83\textcolor{gray}{\text{\scriptsize±2.22}}$
& $33.30\textcolor{gray}{\text{\scriptsize±1.64}}$ & $28.71\textcolor{gray}{\text{\scriptsize±2.23}}$

& $17.87\textcolor{gray}{\text{\scriptsize±0.14}}$ & $\secondres{19.25}\textcolor{gray}{\text{\scriptsize±0.17}}$
& $\secondres{22.33}\textcolor{gray}{\text{\scriptsize±0.16}}$ & $\secondres{19.53}\textcolor{gray}{\text{\scriptsize±0.15}}$

& $\secondres{50.10}\textcolor{gray}{\text{\scriptsize±1.67}}$ & $\secondres{50.93}\textcolor{gray}{\text{\scriptsize±1.51}}$
& $\secondres{52.40}\textcolor{gray}{\text{\scriptsize±1.10}}$ & $\secondres{51.04}\textcolor{gray}{\text{\scriptsize±1.44}}$ \\

& \cellcolor{cyan!5} $ \Delta $ & \cellcolor{cyan!5} + 6.12\% & \cellcolor{cyan!5} + 4.36\% & \cellcolor{cyan!5} + 2.99\% & \cellcolor{cyan!5} + 4.45\% & \cellcolor{cyan!5} + 0.39\% & \cellcolor{cyan!5} + 0.43\% & \cellcolor{cyan!5} + 0.49\% & \cellcolor{cyan!5} + 0.42\% & \cellcolor{cyan!5} -- & \cellcolor{cyan!5} -- & \cellcolor{cyan!5} -- & \cellcolor{cyan!5} -- \\

\midrule

\multirow{4}{*}{\model}
& MAE    
& $\firstres{18.11}\textcolor{gray}{\text{\scriptsize±0.27}}$ & $\firstres{20.87}\textcolor{gray}{\text{\scriptsize±0.17}}$ & $\firstres{24.15}\textcolor{gray}{\text{\scriptsize±0.14}}$ & $\firstres{20.75}\textcolor{gray}{\text{\scriptsize±0.20}}$

& $\firstres{12.65}\textcolor{gray}{\text{\scriptsize±0.03}}$ & $\firstres{13.45}\textcolor{gray}{\text{\scriptsize±0.05}}$ & $\firstres{14.92}\textcolor{gray}{\text{\scriptsize±0.11}}$ & $\firstres{13.53}\textcolor{gray}{\text{\scriptsize±0.06}}$

& $\firstres{5.08}\textcolor{gray}{\text{\scriptsize±0.10}}$ & $\firstres{5.09}\textcolor{gray}{\text{\scriptsize±0.10}}$
& $\firstres{5.15}\textcolor{gray}{\text{\scriptsize±0.10}}$ & $\firstres{5.10}\textcolor{gray}{\text{\scriptsize±0.10}}$ \\

& RMSE    
& $\firstres{27.78}\textcolor{gray}{\text{\scriptsize±0.47}}$ & $\firstres{32.88}\textcolor{gray}{\text{\scriptsize±0.42}}$ & $\firstres{38.22}\textcolor{gray}{\text{\scriptsize±0.31}}$ & $\firstres{32.35}\textcolor{gray}{\text{\scriptsize±0.40}}$

& $\firstres{20.24}\textcolor{gray}{\text{\scriptsize±0.06}}$ & $\firstres{21.86}\textcolor{gray}{\text{\scriptsize±0.09}}$ & $\firstres{24.17}\textcolor{gray}{\text{\scriptsize±0.17}}$ & $\firstres{21.77}\textcolor{gray}{\text{\scriptsize±0.10}}$

& $\firstres{5.26}\textcolor{gray}{\text{\scriptsize±0.10}}$ & $\firstres{5.31}\textcolor{gray}{\text{\scriptsize±0.10}}$
& $\firstres{5.46}\textcolor{gray}{\text{\scriptsize±0.09}}$ & $\firstres{5.33}\textcolor{gray}{\text{\scriptsize±0.10}}$ \\

& MAPE    
& $\firstres{23.12}\textcolor{gray}{\text{\scriptsize±0.20}}$ & $\firstres{26.91}\textcolor{gray}{\text{\scriptsize±0.07}}$ & $\firstres{31.79}\textcolor{gray}{\text{\scriptsize±0.05}}$ & $\firstres{26.89}\textcolor{gray}{\text{\scriptsize±0.12}}$

& $\firstres{17.80}\textcolor{gray}{\text{\scriptsize±0.08}}$ & $\firstres{18.79}\textcolor{gray}{\text{\scriptsize±0.08}}$ & $\firstres{20.82}\textcolor{gray}{\text{\scriptsize±0.16}}$ & $\firstres{18.98}\textcolor{gray}{\text{\scriptsize±0.08}}$

& $\firstres{47.53}\textcolor{gray}{\text{\scriptsize±2.71}}$ & $\firstres{48.20}\textcolor{gray}{\text{\scriptsize±2.68}}$
& $\firstres{50.55}\textcolor{gray}{\text{\scriptsize±2.60}}$ & $\firstres{48.56}\textcolor{gray}{\text{\scriptsize±2.67}}$ \\

& \cellcolor{gray!15} $ \Delta $ & \cellcolor{gray!15} - 1.03\% & \cellcolor{gray!15} - 2.06\% & \cellcolor{gray!15} - 2.26\% & \cellcolor{gray!15} - 1.75\% & \cellcolor{gray!15} - 1.17\% & \cellcolor{gray!15} - 3.33\% & \cellcolor{gray!15} - 7.73\% & \cellcolor{gray!15} - 3.90\% & \cellcolor{gray!15} - 4.33\% & \cellcolor{gray!15} - 4.68\% & \cellcolor{gray!15} - 4.80\% & \cellcolor{gray!15} - 4.85\% \\

\bottomrule

\end{tabular}
}
\end{small}
% \vspace{-6mm}
\end{table*}

\ding{182} \textit{{Pretrain-ST}} methods generally yield the poorest results, especially on smaller datasets (\ie, \textit{{Engery-Stram}}), aligning with the intuition that they directly use a pre-trained model for zero-shot forecasting in subsequent periods. Even with better pre-training on larger dataset (\ie, \textit{{Air-Stream}}), performance remains mediocre. \ding{183} \textit{{Retrain-ST}} methods also exhibit unsatisfactory results, as they rely on limited data to train specific phase models without effectively utilizing historical information gained from the pretrained model.
\ding{184} \textit{{Continual-ST-NN}} methods perform poorly, as they fine-tune the pretrained model using only new node data differing from the old pattern. Despite \textit{TFMoE}'s improvements through complex design, severe catastrophic forgetting remains an issue.
\ding{185} \textit{{Continual-ST-MN}} methods strike a balance between performance and efficiency, showing some improvements, particularly on small datasets (\eg, \textit{STKEC} in \textit{{Energy-Stream}}), due to limited node pattern memory.
\ding{186} \textit{{Continual-ST-AN}} methods typically achieve suboptimal results, as they fine-tune the pretrained model on the full data across different periods, approximating the performance boundary of continual-based methods while still suffering some knowledge forgetting.
\ding{187} Our \model consistently improves all metrics across all types of datasets. We attribute this to its ability to continuously adapt to the complex information and knowledge forgetting challenges inherent in continual spatio-temporal learning scenarios by tuning the prompt pool with heterogeneity-capturing parameters.

\textbf{Few-Shot Performance.}
While the current experiments encompass spatio-temporal datasets of various scales, we aim to examine more general and complex few-shot scenarios. Specifically, for continuous periodic spatio-temporal data, only a limited amount of training data is typically available for each period. We train or fine-tune the model using only 20\% of the data for each period. As shown in Figure~\ref{fig.few_shot}, we present the 12 step and average RMSE performance metrics across different years of the \textit{PEMS-Stream} dataset. Our observations are as follows:

\begin{figure}[htbp!]
% \vspace{-2mm}
\begin{center}
\includegraphics[width=0.85\linewidth]{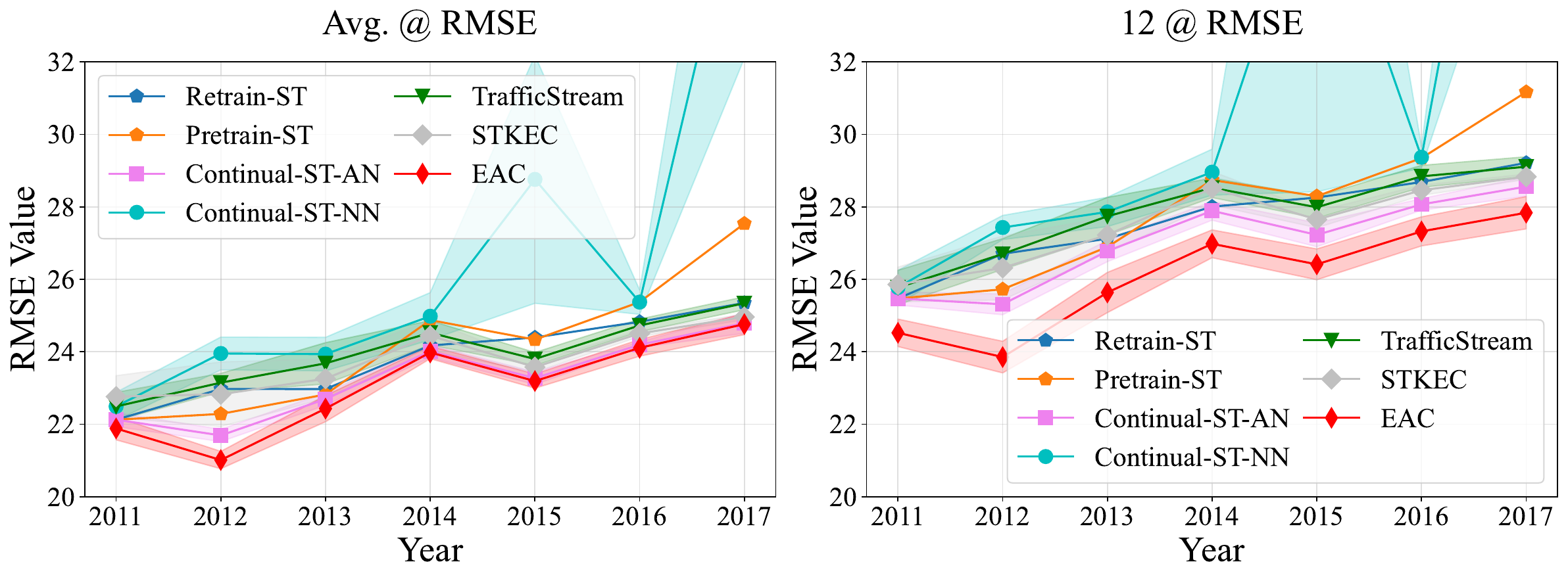}
\caption{Few-Shot Scenario Forecasting in \textit{PEMS-Stream} benchmark.} 
\label{fig.few_shot}
% \vspace{-2mm}
\end{center}
\end{figure}

\ding{182} \textbf{\textit{(Robustness)}} All methods exhibit a decline in performance compared to the complete data scenario, with \textit{Continual-ST-NN} demonstrating significant sensitivity due to its inherent catastrophic forgetting issue. However, our \model method exhibits controllable robustness across all years, outperforming all other methods. 
\ding{183} \textbf{\textit{(Adaptability)}} The performance of all methods consistently declines as the periods extend, yet our \model method demonstrates a relatively mild decline, particularly evident in the 12-step metrics, where it significantly outperforms all other approaches.

\subsection{Universality Study (RQ2)}

\textbf{Setting.}
We further aim to demonstrate the universality of our EAC in enhancing the performance of various STGNN backbones, an aspect largely overlooked in previous studies. Specifically, STGNN can be categorized into spectral-based and spatial-based graph convolution operators, as well as recurrent-based, convolution-based, and attention-based sequence modeling operators. We select a representative operator from each category to form six distinct models, where the core architecture consists of two interleaved graph convolution modules and one sequence module. A detailed description of the different operators can be found in Appendix~\ref{appendix_stgnn}. Additionally, we adapt different models to the prompt parameter pool proposed by \model to compare the performance impact.

\begin{table*}[ht]
\centering
\caption{Effect of \model on the average performance of different STGNN component on the \textbf{\textit{PEMS-Stream}}. C-based: Convolution-based, R-based: Recurrent-based, A-based: Attention-based.}
% \vspace{-2mm}
\label{tab.universality_performance}
\setlength{\tabcolsep}{1.3mm}
\renewcommand{\arraystretch}{1.6}
\scalebox{0.65}{
\begin{tabular}{ccccccccccccc}
\toprule
\multicolumn{1}{c}{\multirow{2}{*}{\textit{\textbf{Methods}}}} & \multicolumn{6}{c}{\textit{\textbf{Spatial-based}}} & \multicolumn{6}{c}{\textit{\textbf{Spectral-based}}} \\
\cmidrule(lr){2-7} \cmidrule(lr){8-13}
\multicolumn{1}{c}{} & \multicolumn{2}{c}{\textit{\textbf{C-based}}} & \multicolumn{2}{c}{\textit{\textbf{R-based}}} & \multicolumn{2}{c}{\textit{\textbf{A-based}}} & \multicolumn{2}{c}{\textit{\textbf{C-based}}} & \multicolumn{2}{c}{\textit{\textbf{R-based}}} & \multicolumn{2}{c}{\textit{\textbf{A-based}}} \\

\cmidrule(lr){2-3}\cmidrule(lr){4-5}\cmidrule(lr){6-7}
\cmidrule(lr){8-9}\cmidrule(lr){10-11}\cmidrule(lr){12-13}

% \hline

\textit{\textbf{Metric}} & MAE & RMSE & MAE & RMSE & MAE & RMSE & MAE & RMSE & MAE & RMSE & MAE & RMSE \\ \hline
\textit{\textbf{w/o}} & 
$14.07\textcolor{gray}{\text{\scriptsize±0.07}}$ & $22.93\textcolor{gray}{\text{\scriptsize±0.08}}$ & $13.23\textcolor{gray}{\text{\scriptsize±0.10}}$ & $21.36\textcolor{gray}{\text{\scriptsize±0.18}}$ & $14.69\textcolor{gray}{\text{\scriptsize±0.42}}$ & $23.33\textcolor{gray}{\text{\scriptsize±0.59}}$ & $14.01\textcolor{gray}{\text{\scriptsize±0.01}}$ & $22.76\textcolor{gray}{\text{\scriptsize±0.03}}$ & $13.73\textcolor{gray}{\text{\scriptsize±0.91}}$ & $21.87\textcolor{gray}{\text{\scriptsize±1.24}}$ & $14.64\textcolor{gray}{\text{\scriptsize±0.06}}$ & $23.12\textcolor{gray}{\text{\scriptsize±0.06}}$ \\

% \hline

\model & 
$13.72\textcolor{gray}{\text{\scriptsize±0.01}}$ & $22.14\textcolor{gray}{\text{\scriptsize±0.02}}$ & $\firstres{12.83}\textcolor{gray}{\text{\scriptsize±0.05}}$ & $\firstres{20.59}\textcolor{gray}{\text{\scriptsize±0.09}}$ & $14.64\textcolor{gray}{\text{\scriptsize±0.53}}$ & $22.79\textcolor{gray}{\text{\scriptsize±0.59}}$ & $13.62\textcolor{gray}{\text{\scriptsize±0.12}}$ & $21.80\textcolor{gray}{\text{\scriptsize±0.18}}$ & $\secondres{13.09}\textcolor{gray}{\text{\scriptsize±0.37}}$ & $\secondres{20.80}\textcolor{gray}{\text{\scriptsize±0.60}}$ & $13.69\textcolor{gray}{\text{\scriptsize±0.08}}$ & $21.65\textcolor{gray}{\text{\scriptsize±0.10}}$ \\

% \hline

\rowcolor{gray!15} $ \Delta $ & - 2.48\% & - 3.44\% & - 3.02\% & - 3.60\% & - 0.34\% & - 2.31\% & - 2.78\% & - 4.21\% & - 4.66\% & - 4.89\% & - 6.48\% & - 6.35\% \\

\bottomrule

\end{tabular}
}
\end{table*}

\textbf{Result Analysis.}
% We report the performance impact of the prompt pool proposed by \model on six combinations of different operators using the \textit{PEMS-Stream} dataset. Due to page limitations, we present MAE and RMSE metrics at average time steps, as shown in Table~\ref{tab:performance_avg}. We observe that:
Due to page limitations, we present MAE and RMSE metrics at average time steps using the \textit{PEMS-Stream} dataset. As shown in Table~\ref{tab.universality_performance}, we observe that:

\ding{182} Our \model consistently demonstrates performance improvements across different combinations, highlighting its universality for various architectures. \ding{183} Compared to spatial domain-based graph convolution operators, our \model shows a more pronounced enhancement for spectral domain-based methods. \ding{184} The advantages of recurrent-based sequence modeling operators are particularly evident, achieving the best results, while attention-based methods perform the worst. This aligns with intuition, as directly introducing vanilla multi-head attention may lead to excessive parameters and over-fitting; however, our approach still provides certain gains in these cases.

\subsection{Efficiency \& Lightweight Study (RQ3 \& RQ4)}
\textbf{Overall Analysis.}
We first conduct a comprehensive comparison of the \model with other baselines in terms of performance, training speed, and memory usage. All models are configured with the same batch size to ensure fairness. As illustrated in Figure~\ref{fig.efficiency_lightweight}, we visualize the performance, average tuning parameters, and average training time (per period) of different methods on both the smallest dataset (\textit{Energy-Stream}) and the largest dataset (\textit{Air-Stream}). Our observations are as follows:

\begin{figure}[htbp!]
    \centering
    % \vspace{-2mm}
    \begin{minipage}{0.425\textwidth}
        \includegraphics[width=\linewidth]{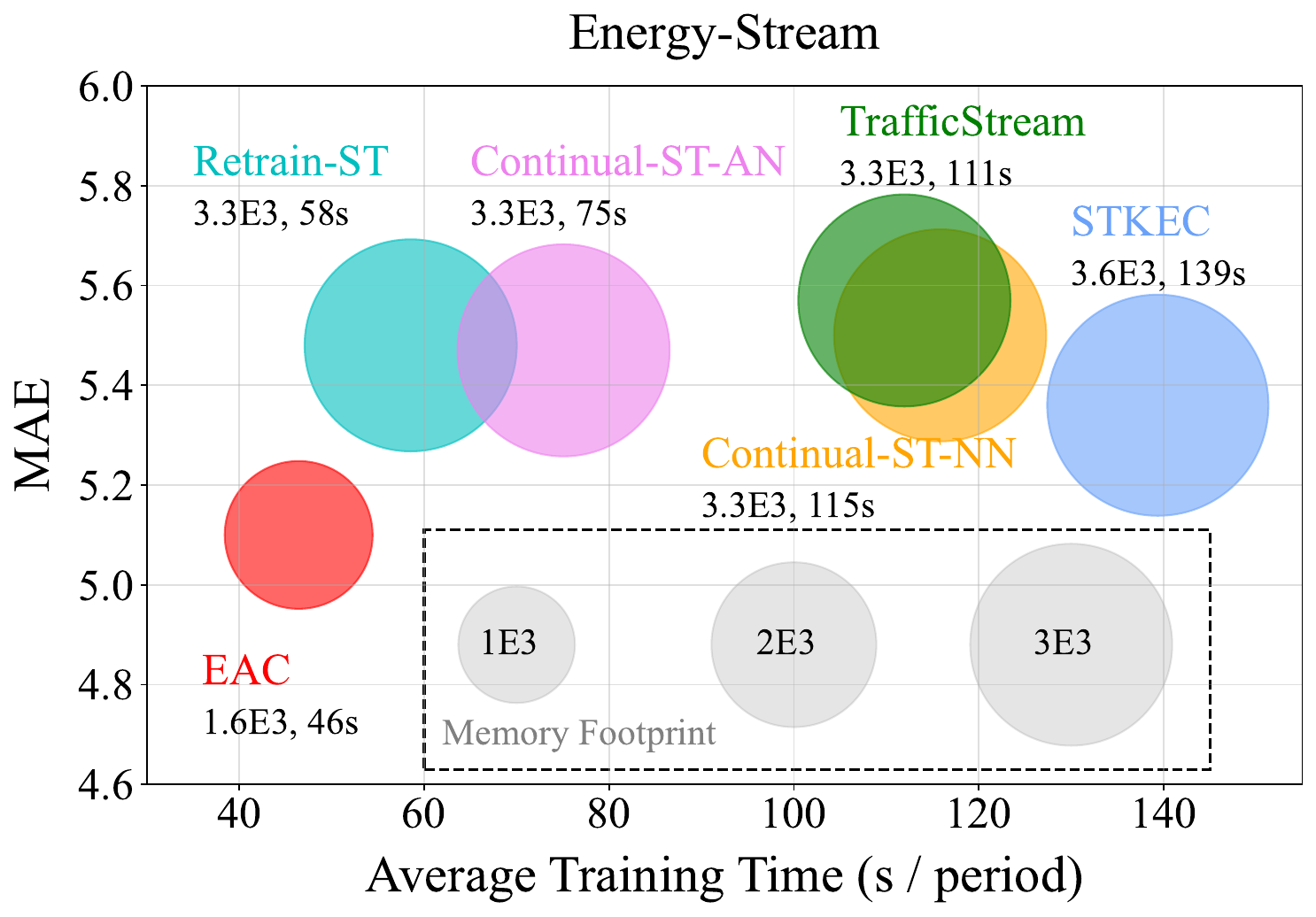}
    \end{minipage}%
    \hspace{0.2cm}
    \begin{minipage}{0.425\textwidth}
        \includegraphics[width=\linewidth]{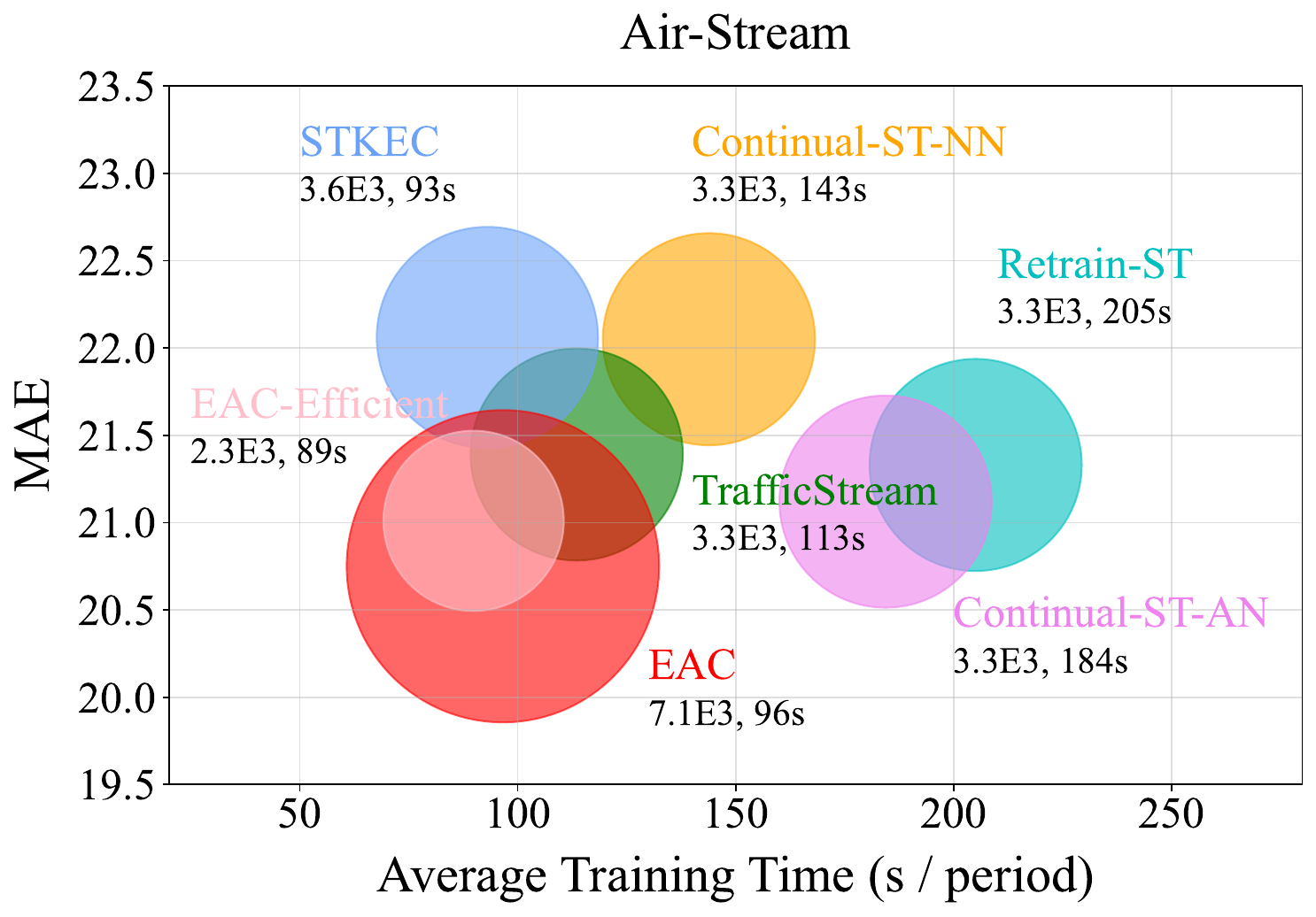}
    \end{minipage}
    \caption{Efficiency \& Lightwight \& Performance Study.}
    \label{fig.efficiency_lightweight}
    % \vspace{-2mm}
\end{figure}

\ding{182} On datasets with a smaller number of nodes, our \model consistently outperforms the others, achieving superior performance with only half the number of tuning parameters. Furthermore, the average training time per period accelerates by a factor of 1.26 to 3.02. In contrast, other methods such as \textit{TrafficStream} and \textit{STKEC}, despite employing second-order subgraph sampling techniques, do not benefit from efficiency improvements due to the limited number of nodes, which typically necessitates covering the entire graph. \ding{183} On datasets with a larger number of nodes, although the \model exhibits a slightly higher number of tuning parameters, the freezing of the backbone model results in faster training speeds while still achieving superior performance. \ding{184} According to the tuning principle of compression, we set $k=2$ to replace $6$, resulting in the \model-\textit{Efficient} version, which maintains relative performance superiority on larger datasets using only $\sim$ 63\% parameters compared to others. This demonstrates the superiority of the compression principle we propose.

\textbf{Hyper-parameter Analysis.}
We further examine our sole hyper-parameter $k$, proposed through the compress principle, and its mutual influence on performance and tuning parameters. As shown in Figure~\ref{fig.hyperparameter}, the horizontal axis denotes the values of $k$, while the vertical axes depict the performance on the \textit{PEMS-Stream}. The color of the bars indicates the averaged percentage of tuning parameters relative to the total number of parameters across all periods. Our observations are as follows:

\begin{figure}[htbp!]
% \vspace{-4mm}
\begin{center}
\includegraphics[width=0.85\linewidth]{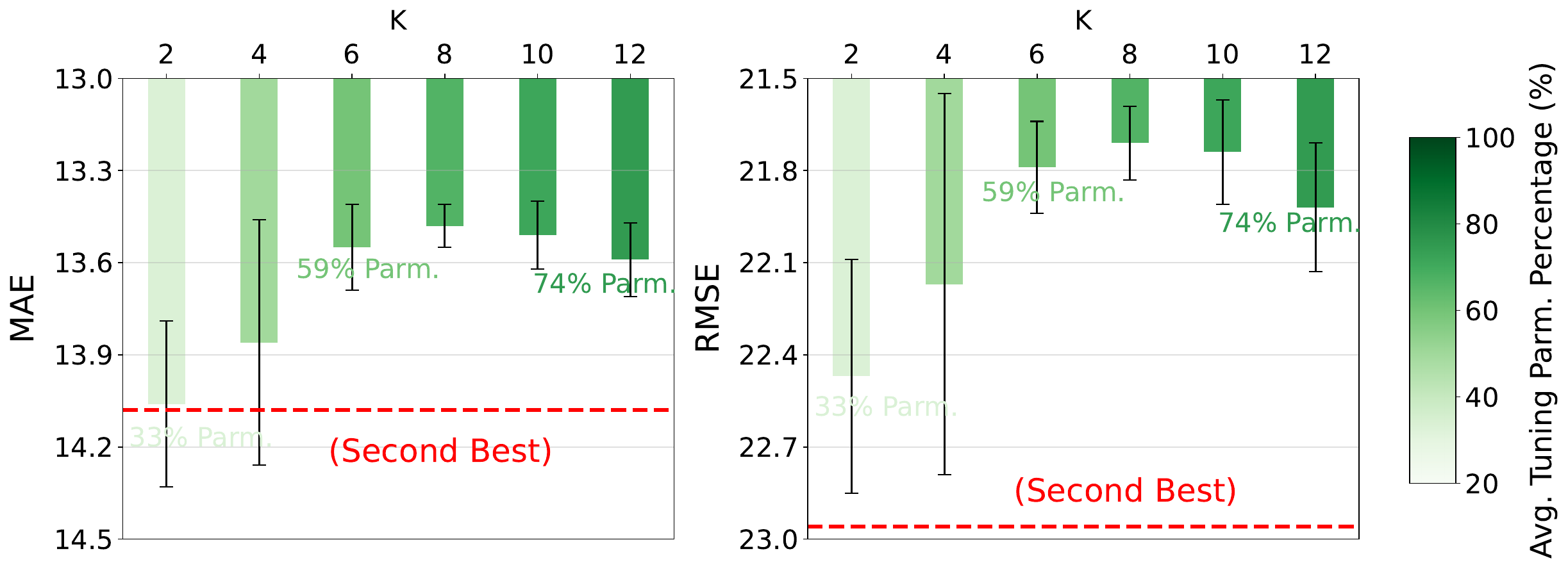}
\caption{Hyper-parameter study in \textit{PEMS-Stream} benchmark.} 
\label{fig.hyperparameter}
% \vspace{-4mm}
\end{center}
\end{figure}

\ding{182} From right to left, as the value of $k$ decreases, indicating a reduction in tuning parameters, the overall performance of the model deteriorates significantly, accompanied by increased volatility (\ie, higher standard deviation). This aligns with our findings in Figure~\ref{fig.compress_study}, as the effective representational information of the approximate prompt parameter pool is constrained with decreasing $k$, leading to a marked decline in performance. \ding{183} We set $k = 6$ as a default. Although performance continues to improve with increasing $k$, the gains are minimal. Moreover, excessively high values of $k$ may result in negative effects due to redundant parameters, leading to over-fitting. Consequently, our method achieves satisfactory performance using only approximately 59\% of the tuning parameters, effectively balancing performance and parameter efficiency.

\subsection{Simplicity Study (RQ5)}

\begin{wraptable}{r}{0.58\textwidth}
    \centering
    \renewcommand{\arraystretch}{1.5}
    % \vspace{-6mm}
    \caption{Performance comparison of \textit{LoRA-Based} method with \model on \textit{PEMS-Stream} benchmark.}
    % \vspace{-2mm}
    \setlength{\tabcolsep}{1.0mm}{}
    \resizebox{0.58\textwidth}{!}{ % 调整表格大小
    \begin{tabular}{ccccc}
    \toprule
    Model & 12 @ MAE & 12 @ RMSE & 12 @ MAPE & Avg. Time\\
    \cmidrule(lr){1-1}\cmidrule(lr){2-4}\cmidrule(lr){5-5}
    \textit{\textbf{LoRA-Based}} & $\secondres{16.22}\textcolor{gray}{\text{\scriptsize±0.13}}$ & $\secondres{26.81}\textcolor{gray}{\text{\scriptsize±0.23}}$ & $\secondres{22.78}\textcolor{gray}{\text{\scriptsize±1.03}}$ & $\secondres{337.31}\textcolor{gray}{\text{\scriptsize±23.90}}$  \\
    % \midrule
    \rowcolor{gray!15} \model & $\firstres{14.92}\textcolor{gray}{\text{\scriptsize±0.11}}$ & $\firstres{24.17}\textcolor{gray}{\text{\scriptsize±0.17}}$ & $\firstres{20.82}\textcolor{gray}{\text{\scriptsize±0.16}}$ & $\firstres{224.33}\textcolor{gray}{\text{\scriptsize±26.35}}$  \\
    \bottomrule
    \end{tabular}
    } % 结束resizebox
    \label{tab:lora_compare}
    % \vspace{-4mm}
\end{wraptable}

\textbf{Simplicity Analysis.}
Lastly, we aim to explore the simplicity of the node parameter prompt pool, as well as the effectiveness of the expansion and compression principle. We selected a common low-rank adaptation (\textit{LoRA})~\citep{hu2021lora, ruan2024low} technique, which has recently been widely used in large language models. Following the default architecture, we added low-rank adaptation layers to the sequence operators, setting the rank to $6$, and fine-tuned the backbone model during each period. As shown in Table~\ref{tab:lora_compare}, we observe that simply applying \textit{LoRA} layers without considering the specific spatio-temporal context of streaming parameters may not be highly effective. Moreover, our method enjoy shorter training times compared to \textit{LoRA-based} approaches, further validating the superiority of our proposed expansion and compression tuning principle.

\section{Conclusion}

In this paper, we derive two fundamental tuning principle: expand and compress for continual spatio-temporal forecasting scenarios through empirical observation and theoretical analysis. Adhering to these principle, we propose a novel prompt-based continual forecasting method, \model, which effectively adapts to the complexities of dynamic continual spatio-temporal forecasting problems. Experimental results across various datasets from different domains in the real world demonstrate that \model possesses desirable characteristics such as simplicity, effectiveness, efficiency, and university. In the future, we will further explore large-scale pre-training methods for spatio-temporal data, leveraging the tuning principle proposed in this paper, which could broadly benefit a wide range of downstream continual forecasting tasks.

\clearpage
\section*{Acknowledgment}
The authors would like to thank the anonymous reviewers for their valuable comments. 
This work is mainly supported by the National Natural Science Foundation of China (No. 62402414). This work is also supported by the Guangzhou-HKUST(GZ) Joint Funding Program (No. 2024A03J0620), Guangzhou Municipal Science and Technology Project (No. 2023A03J0011),  the Guangzhou Industrial Information and Intelligent Key Laboratory Project (No. 2024A03J0628), and a grant from State Key Laboratory of Resources and Environmental Information System, and Guangdong Provincial Key Lab of Integrated Communication, Sensing and Computation for Ubiquitous Internet of Things (No. 2023B1212010007). 

\definecolor{textgray}{HTML}{6E6E73}
\makeatletter
\newcommand\applefootnote[1]{%
  \begingroup
  \renewcommand\thefootnote{}%
  \renewcommand\@makefntext[1]{\noindent##1}%
  \footnote{#1}%
  \addtocounter{footnote}{-1}%
  \endgroup
}
\makeatother

\bibliography{ref}
\bibliographystyle{plainnat}
\newpage
\appendix

\section*{Appendix}
\appendix
\counterwithin{figure}{section}
\counterwithin{table}{section}
\fancypagestyle{appendixfooter}{
  \fancyhf{} 
  \renewcommand{\headrulewidth}{0pt}
  \renewcommand{\footrulewidth}{0pt}
  \fancyfoot[L]{\hyperlink{appendix-start}{{\textit{Go to Appendix Index}}}}
  \fancyfoot[C]{\thepage}
}

\hypertarget{appendix-start}{}
\pagestyle{appendixfooter}

% In the appendix, we provide our LLM usage statement, elaborate on related works and describe the datasets used. Additionally, we focus on its implementation and provide extensive details about the prompts used for dataset curation and the evaluation. Finally, we provide visual examples of the proposed dataset.

\vspace{1.5em}

\hrule height .8pt
\DoToC
\hrule height .8pt

\vspace{1.5em}

\clearpage
\section{Theoretical Proof}

\subsection{Prompt Parameter Pool Can Continuously Adapt to Heterogeneity Property}\label{proof_heterogeneity}

\begin{proof}
First, based on the definition of the average node vector dispersion, the original feature matrix can be further rewritten as:
\begin{equation}\label{and_x}
    D(X)=\frac1{n^2}\sum_{i=1}^n\sum_{j=1}^n\|x_i-x_j\|^2.
\end{equation}
For the spatio-temporal learning function $f_\theta$ with invariance (\ie, a frozen STGNN backbone network), we have $X^{\theta} = f(\theta; X, P) = X + P^{\theta}$, if the network converges, $P^{\theta}$ can be obtained through nonlinear fitting. Therefore, for vectors in the matrix, we denote:
\begin{equation}
    x^{\theta}=x+p^{\theta}.
\end{equation}
Similarly, substituting this into the new feature matrix, it can be rewritten as:
\begin{equation}\label{and_xtheta}
    D(X^\theta)=\frac1{n^2}\sum_{i=1}^n\sum_{j=1}^n\|x_i+p_i^\theta-x_j-p_j^\theta\|^2
\end{equation}
Based on the following equation property:
\begin{equation}
    \sum_{i,j}\|x_i-x_j\|^2=2n\sum_{i=1}^n\|x_i\|^2-2\left\|\sum_{i=1}^nx_i\right\|^2.
\end{equation}
Thus, the difference between Equation~\ref{and_xtheta} and Equation~\ref{and_x} can be rewritten as:
\begin{equation}\label{and_diff}
\begin{aligned}
& D(X^\theta)-D(X)\\
&=\frac1{n^2}\sum_{i=1}^n\sum_{j=1}^n\|x_i+p_i^\theta-x_j-p_j^\theta\|^2-\frac1{n^2}\sum_{i=1}^n\sum_{j=1}^n\|x_i-x_j\|^2\\
&=\frac{1}{n^2}\left(2n\sum_{i=1}^n\|x_i+p_i^\theta\|^2-2\left\|\sum_{i=1}^n(x_i+p_i^\theta)\right\|^2-\left(2n\sum_{i=1}^n\|x_i\|^2-2\left\|\sum_{i=1}^nx_i\right\|^2\right)\right)\\
&=\frac{2}{n}\sum_{i=1}^n\|x_i+p_i^\theta\|^2-\frac2{n^2}\left\|\sum_{i=1}^n(x_i+p_i^\theta)\right\|^2-\frac{2}{n}\sum_{i=1}^n\|x_i\|^2+\frac2{n^2}\left\|\sum_{i=1}^nx_i\right\|^2.
\end{aligned}
\end{equation}
The mean vectors of the original feature matrix and the prompt parameter matrix are defined as follows:
\begin{equation}
    \mu=\frac1n\sum_{i=1}^nx_i,~~\mu_{p}^{\theta}=\frac1n\sum_{i=1}^np_i^\theta.
\end{equation}
Therefore, we have:
\begin{equation}
    \sum_{i=1}^nx_i=n\mu,~~\sum_{i=1}^np_i^\theta=n\mu_p^\theta.
\end{equation}
Substituting this into Equation~\ref{and_diff} and further simplifying yields:
\begin{equation}
\begin{aligned}
& D(X^\theta)-D(X)\\
&=\frac2{n}\sum_{i=1}^n\|x_i+p_i^\theta\|^2-\frac2{n^2}\left\|n\mu+n\mu_p^\theta\right\|^2-\frac2{n}\sum_{i=1}^n\|x_i\|^2+\frac2{n^2}\left\|n\mu\right\|^2\\
&=\frac2{n}\sum_{i=1}^n\|x_i+p_i^\theta\|^2-2\|\mu+\mu_p^\theta\|^2-\frac2{n}\sum_{i=1}^n\|x_i\|^2+2\|\mu\|^2\\
&=\frac{2}{n}\left(\sum_{i=1}^{n}\|x_{i}\|^{2}+\sum_{i=1}^{n}\|p_{i}^{\theta}\|^{2}+2\sum_{i=1}^{n}x_{i}^{\top}p_{i}^{\theta}\right)-2\left(\|\mu\|^2+2\mu^\top\mu_p^\theta+\|\mu_p^\theta\|^2\right)-\frac{2}{n}\sum_{i=1}^{n}\|x_{i}\|^{2}+2\|\mu\|^{2}\\
&=\frac{2}{n}\sum_{i=1}^{N}\|p_{i}^{\theta}\|^{2}+\frac{4}{n}\sum_{i=1}^{n}x_{i}^{\top}p_{i}^{\theta}-4\mu^{\top}\mu_{p}^{\theta}-2\|\mu_{p}^{\theta}\|^{2}\\
&=\frac2{n}\sum_{i=1}^n\|p_i^\theta\|^2+\frac4{n}\left(\sum_{i=1}^nx_i\right)^\top\left(\frac1n\sum_{i=1}^np_i^\theta\right)-4\mu^\top\mu_p^\theta-2\|\mu_p^\theta\|^2\\
&=\frac2{n}\sum_{i=1}^n\|p_i^\theta\|^2+\frac4{n}(n\mu)^\top\mu_p^\theta-4\mu^\top\mu_p^\theta-2\|\mu_p^\theta\|^2\\
&=2(\frac1{n}\sum_{i=1}^n\|p_i^\theta\|^2-\|\mu_p^\theta\|^2).
\end{aligned}
\end{equation}
According to the corollary of the Cauchy-Schwarz inequality~\citep{horn2012matrix}:
\begin{equation}
    \sum_{i=1}^n\|\mathbf{x}_i\|^2\geq\frac1n\left\|\sum_{i=1}^n\mathbf{x}_i\right\|^2.
\end{equation}
Therefore, for any set of vectors $\{p^\theta_i\}$, we have:
\begin{equation}
    \sum_{i=1}^n\|p_i^\theta\|^2\geq\frac1n\left\|\sum_{i=1}^np_i^\theta\right\|^2=n\|\mu_p^\theta\|^2.
\end{equation}
That is:
\begin{equation}
    \frac1n\sum_{i=1}^n\|p_i^\theta\|^2-\|\mu_p^\theta\|^2\geq0.
\end{equation}
Thus,
\begin{equation}
    D(X^\theta)-D(X)=2(\frac1{n}\sum_{i=1}^n\|p_i^\theta\|^2-\|\mu_p^\theta\|^2)\geq0.
\end{equation}
This completes the proof. 
\end{proof}

\subsection{Prompt Parameter Pool Can Continuously Satisfy the Low-rank Property}\label{proof_lowrank}

\begin{proof}
We first construct a Random Matrix $\Phi \in \mathbb{R}^{k \times n}$. Let $\Phi$ be a random matrix with entries $\phi_{ij}$ drawn independently from the standard normal distribution scaled by $1/\sqrt{k}$:
$$
\phi_{ij} \sim \mathcal{N}\left(0, \frac{1}{k}\right).
$$
Next, we can define matrices $A=\Phi^\top \in \mathbb{R}^{n \times k}$ and $B=\Phi P \in \mathbb{R}^{k \times d}$. Thus, we have
$$
AB = \Phi^\top (\Phi P) = (\Phi^\top \Phi) P.
$$
Consider the approximation error:
$$
\|P - AB\|_F = \left\| (I_n - \Phi^\top \Phi) P \right\|_F \leq \left\| I_n - \Phi^\top \Phi \right\|_2 \|P\|_F.
$$
Thus, our goal is transform to bound $\left\| I_n - \Phi^\top \Phi \right\|_2$. According to concentration inequality for tail bounds of spectral norms of sums of random matrices~\citep{tropp2012user}, for any $\epsilon \in (0, 1)$:
$$
\Pr\left( \left\| \Phi^\top \Phi - I_n \right\|_2 \geq \epsilon \right) \leq 2 n \cdot e^{ - \frac{k \epsilon^2}{2}}.
$$
We further show that, when set $k = \left\lceil \frac{4 \log \min(n, d)}{\epsilon^2} \right\rceil$, we obtain:
$$
\Pr\left( \left\| \Phi^\top \Phi - I_n \right\|_2 \geq \epsilon \right) \leq 2 n \cdot e^{-2 \log \min(n, d)} = 2 n \cdot \min(n, d)^{-2}.
$$
Since $\min(n, d) \leq n$, it follows that:
$$
2 n \cdot \min(n, d)^{-2} \leq 2 n \cdot n^{-2} = \frac{2}{n} \to 0 \quad \text{as} \quad n \to \infty.
$$
We have:
$$
\Pr\left( \left\| \Phi^\top \Phi - I_n \right\|_2 \leq \epsilon \right) \geq 1 - o(1).
$$
Therefore, due to the symmetry of the matrix norm, with probability at least $1 - o(1)$, we have:
$$
\|P - AB\|_F \leq \left\| I_n - \Phi^\top \Phi \right\|_2 \|P\|_F \leq \epsilon \|P\|_F.
$$
Thus, there exist matrices $A \in \mathbb{R}^{n \times k}$ and $B \in \mathbb{R}^{k \times d}$ with $k = \mathcal{O}\left( \log(\min(n, d)) \right)$ such that:
$$
\Pr\left( \|P - AB\|_F \leq \epsilon \|P\|_F \right) \geq 1 - o(1),
$$
where $o(1)$ represents a term that becomes negligible as $n$ grows large.

This completes the proof.
\end{proof}
\section{Method Detail}\label{method_process}

\subsection{Method Algorithm}

\begin{algorithm}
    % \small
    \caption{The workflow of \model for continual spatio-temporal graph forecasting}
    \fontsize{9}{12}\selectfont
    \renewcommand{\algorithmicrequire}{ \textbf{Input:}}
    \renewcommand{\algorithmicensure}{ \textbf{Output:}}
    \begin{algorithmic}[1]
        \REQUIRE ~~\\
            Dynamic streaming spatio-temporal graph $\mathbb{G} = (\mathcal{G}_1, \mathcal{G}_2, \cdots, \mathcal{G}_{\mathcal{T}})$ \\
            Observation data $\mathbb{X} = (X_1, X_2, \cdots, X_{\mathcal{T}})$.\\
            % Augmented Data: $\mathcal{D}$
        \ENSURE ~~\\
            A prompt parameter pool $\mathcal{P}$ (in memory).
        \renewcommand{\algorithmicrequire}{ \textbf{Pipeline:}}
        \REQUIRE
        \WHILE {Stream Graph $\mathbb{G}$ remains}
        
        \IF{$\tau==1$}
        \STATE Construct an initial prompt parameter pool: $\mathcal{P}={A^{(\tau)}B}$. \\~~~~~~~~~~~~~~~~~~~~~~~~~~~~~~~~~~~~~~~~~~~~~~~~~~~~~~~~~~~~~~~~~~~~~~~~~~~~~~~~~~~~~\textcolor{red}{\textbf{\textit{$\vartriangleright$ Tuning Principle \uppercase\expandafter{\romannumeral2}: Compress}}}
        \STATE Fusion of observed data $X$ and prompt parameter pool $\mathcal{P}$: $X_\tau = X_\tau + \mathcal{P}$.
        \STATE Jointly optimize the base STGNN $f_\theta$ and $\mathcal{P}$: $f_{{\theta}^{*}}=\operatorname*{argmin}_{\theta} {f_\theta(X_\tau, \mathcal{G}_1)}$.
        \ELSE
        \STATE Reload the prompt pool $\mathcal{P}$ and model $f_{{\theta}^{*}}$.
        \STATE Detect new nodes and construct a prompt parameter matrix $A^{(\tau)}$
        \STATE Add new node prompts to the prompt parameter pool: $\mathcal{P}=\mathcal{P}.append(A^{(\tau)}B)$
        \\~~~~~~~~~~~~~~~~~~~~~~~~~~~~~~~~~~~~~~~~~~~~~~~~~~~~~~~~~~~~~~~~~~~~~~~~~~~~~~~~~~~~\textcolor{red}{\textbf{\textit{$\vartriangleright$ Tuning Principle \uppercase\expandafter{\romannumeral1}: Expand}}}
        \STATE Fusion of observed data $X$ and prompt parameter pool $\mathcal{P}$: $X_\tau = X_\tau + \mathcal{P}$.
        \STATE Jointly optimize the frozen STGNN $f_{{\theta}^{*}}$ and $\mathcal{P}$ with data $X_\tau$.
        \ENDIF
        \STATE Use the STGNN and prompt parameter pool $\mathcal{P}$ for current period prediction.
        \ENDWHILE
        \RETURN Prompt Parameter Pool $\mathcal{P}$
    \end{algorithmic}\label{algo.eac}
\end{algorithm}

Here, we present the workflow of \model for continual spatio-temporal forecasting in Algorithm~\ref{algo.eac}.

\subsection{Method Process}

Our input consists of a series of continual spatio-temporal graphs and corresponding observational data. Our objective is to maintain a continuous dynamic prompt parameter pool. Specifically, we first construct an initial prompt parameter pool $\mathcal{P}$ that includes the initial node prompt parameters $P^{(1)}$, which we approximate using subspace parameters $A^{(1)}$ and fixed parameters $B$ \textit{(line 3)}. We then perform element-wise addition of the observational data $X_1$ with the prompt parameter pool at the node level \textit{(line 4)}, and feed this into the base STGNN model $f_\theta$ for joint training \textit{(line 5)}. In the subsequent $\tau$-th period, we detect new nodes compared to the $\tau-1$ phase and construct a prompt parameter matrix $A^{(\tau-1)}$ for all newly added nodes \textit{(line 8)}, which is then incorporated into the prompt parameter pool $\mathcal{P}$ \textit{(line 9)}. We again perform element-wise addition of the observational data $D_\tau$ with the prompt parameter pool at the node level \textit{(line 10)} and input this into the frozen base STGNN model $f_{\theta^{*}}$ for training \textit{(line 11)}.

After completing each period of training, we use the current prompt parameter pool $P$ and the base STGNN $f_\theta$ to predict the results for the current period \textit{(line 13)}.

\section{Experimental Details}

\subsection{Datasets Details}\label{appendix_datasets}

Our experiments are carried out on three real-world datasets from diffrent domain. The statistics of these stream spatio-temporal graph datasets are shown in Table~\ref{tab.datasets}.

Specifically, we first utilize the \textit{PEMS-Stream} dataset, which serves as a benchmark dataset, for our analysis. The objective is to predict future traffic flow based on historical traffic flow observations from a directed sensor graph. Additionally, we construct a \textit{Air-Stream} dataset for the meteorological domain focused on air quality, aiming to predict future air quality index (AQI) flow based on observations from various environmental monitoring stations located in China. We segment the data into four periods, corresponding to four years. We also construct a \textit{Energy-Stream} dataset for wind power in the energy domain, where the goal is to predict future indicators based on the generation metrics of a wind farm operated by a specific company (using the temperature inside the turbine nacelle as a substitute for active power flow observations). We also divided the data into four periods, corresponding to four years, based on the appropriate sub-period dataset size.

\begin{table}[ht]
\caption{Summary of datasets used for continual spatio-temporal datasets.}
\centering
\setlength{\tabcolsep}{1.8mm}{}
\renewcommand{\arraystretch}{2}
\scalebox{0.8}{
\begin{tabular}{ccccccc}
\toprule
\textbf{Dataset}      & \textbf{Domain} & \textbf{Time Range}            & \textbf{Period} & \textbf{Node Evolute}                      & \textbf{Frequency} & \textbf{Frames} \\ \hline
\textit{\textbf{Air-Stream}}   & Weather & 01/01/2016 - 12/31/2019 & 4      &  \makecell[c]{1087 $\rightarrow$ 1154\\ $\rightarrow$ 1193 $\rightarrow$ 1202}                & 1 hour     & 34,065  \\ \hline
\textit{\textbf{PEMS-Stream}}  & Traffic & 07/10/2011 - 09/08/2017 & 7      & \makecell[c]{655 $\rightarrow$ 715 $\rightarrow$ 786 \\ $\rightarrow$ 822 $\rightarrow$ 834 $\rightarrow$ 850\\ $\rightarrow$ 871}     & 5 min      & 61,992  \\ \hline
\textit{\textbf{Energy-Stream}} & Energy  & Unknown (245 days)      & 4      & \makecell[c]{103 $\rightarrow$ 113 \\ $\rightarrow$ 122 $\rightarrow$ 134}                    & 10 min     & 34,560  \\
\bottomrule
\end{tabular}
}
\label{tab.datasets}
% \vspace{-4mm}
\end{table}

We follow conventional practices~\citep{li2017diffusion,bai2020adaptive} to define the graph topology for the air quality and wind power datasets. Specifically, we construct the adjacency matrix $A_\tau$ for $\tau$-th year using a threshold Gaussian kernel, defined as follows:
$$
A_{\tau[ij]} = 
\begin{cases} 
\exp\left(-\frac{d_{ij}^2}{\sigma^2}\right) & \text{if } \exp\left(-\frac{d_{ij}^2}{\sigma^2}\right) \geq r ~\text{and}~i \neq j\\ 
0 & \text{otherwise}
\end{cases}
$$
where $d_{ij}$ represents the distance between sensors $i$ and $j$, $\sigma$ is the standard deviation of all distances, and $r$ is the threshold. Empirically, we select $r$ values of 0.5 and 0.99 for the air quality and wind power datasets, respectively.

% \revision{
\textbf{Discussion:} There are some differences in the datasets of these three fields. Specifically, the primary difference between the three spatio-temporal datasets lies in the underlying spatio-temporal dynamics, or spatial dependencies: \ding{182} \textbf{(Differences in Underlying Spatio-temporal Dynamics)} In order, \textit{Energy-Stream} involves a small wind farm with closely spaced turbines that share similar spatial patterns. \textit{PEMS-Stream} represents the entire transportation system in Southern California, with moderate spatial dependencies. \textit{Air-Stream} goes further, encompassing air quality records from air monitoring stations across all of China, leading to much more complex spatial dependencies. These differences are reflected in the MAE and RMSE metrics, with the performance progressively degrading as the complexity increases. \ding{183} \textbf{(Special Characteristics of Energy Data)} Regarding the MAPE metric, we must note that in the \textit{Energy-Stream} dataset, turbines occasionally face overheating protection or shutdown for inspection, which can cause some devices to produce very small values during certain time periods. This causes large percentage errors when using MAPE.
% }

\subsection{Baseline Details}\label{appendix_baseline}

We have provided a detailed overview of various types of methods for continual spatio-temporal forecasting in the main body of the paper. The current core improvements primarily focus on online-based methods. Below is a brief introduction to these advance methods:

\begin{itemize}[leftmargin=*,parsep=5pt]
    \item TrafficStream~\citep{chen2017traffic}: The paper first introduces a novel forecasting framework leveraging spatio-temporal graph neural networks and continual learning to predict traffic flow in expanding and evolving networks. It addresses challenges in long-term traffic flow prediction by integrating emerging traffic patterns and consolidating historical knowledge through strategies like historical data replay and parameter smoothing. \textcolor{magenta}{\url{https://github.com/AprLie/TrafficStream/tree/main}}
    \item STKEC~\citep{wang2023knowledge}: The paper also presents a continual learning framework, which is designed for traffic flow prediction on expanding traffic networks without the need for historical graph data. The framework introduces a pattern bank for storing representative network patterns and employs a pattern expansion mechanism to incorporate new patterns from evolving networks. \textcolor{magenta}{\url{https://github.com/wangbinwu13116175205/STKEC}}
    \item PECPM~\citep{wang2023pattern}: The paper also discusses a method for predicting traffic flow on graphs that change over time. To address the challenges presented by these dynamic graph, the paper proposes a framework that includes two main components: Knowledge Expansion and Knowledge Consolidation. The former is responsible for detecting and incorporating new patterns and structures as the graph expands. The latter aims to prevent the loss of knowledge about traffic patterns learned from historical data as the model updates itself with new information.
    \item TFMoE~\citep{lee2024continual}: The paper introduces Traffic Forecasting Mixture of Experts for continual traffic forecasting on evolving networks. The key innovation is segmenting traffic flow into multiple groups and assigning a dedicated expert model to each group. This addresses the challenge of catastrophic forgetting and allows each expert to adapt to specific traffic patterns without interference.
\end{itemize}

\begin{table}[htbp]
\centering
\caption{Comparison of ST-CRL and our method on the PEMS-Stream benchmark.}
\begin{small}
\renewcommand{\arraystretch}{1.5}
\scalebox{0.8}{
\begin{tabular}{ccccccc}
\toprule
\textbf{Horizon} & \multicolumn{3}{c}{\textbf{3}} & \multicolumn{3}{c}{\textbf{12}} \\ 
\cmidrule(lr){1-1}\cmidrule(lr){2-4}\cmidrule(lr){5-7}
\textbf{Metric}  & \textbf{MAE} & \textbf{RMSE} & \textbf{MAPE(\%)} & \textbf{MAE} & \textbf{RMSE} & \textbf{MAPE(\%)} \\ 
\midrule
\textit{\textbf{ST-CRL}}  & 18.41        & 24.63         & 22.64         & 24.45        & 35.11         & 29.40         \\ 
\textit{\textbf{\model}}    & 12.65±0.03   & 20.24±0.06    & 17.80±0.08    & 14.92±0.11   & 24.17±0.17    & 20.82±0.16    \\ 
\bottomrule
\end{tabular}
}
\end{small}

\label{tab:comparison_stcrl}
% \vspace{-2mm}
\end{table}

% \revision{
We also highlight some related works and explain why they are not included as baselines:  
\begin{itemize}[leftmargin=*,parsep=5pt]
    \item ST-CRL~\citep{xiao2022streaming}: This work integrates reinforcement learning into continual spatio-temporal graph learning. We exclude it because its code is unavailable and its methods are not reproducible. Additionally, its poor performance further justifies this decision. Notably, according to the original paper, the comparison between ST-CRL and our method on the PEMS-Stream benchmark, as shown in Table~\ref{tab:comparison_stcrl}, provides evidence for this.
    \item URCL~\citep{miao2024unified}: This work incorporates data augmentation into continual spatio-temporal graph learning. However, it essentially treats spatio-temporal graphs as static, with only observed instances evolving over time. Thus, this approach is not directly comparable to ours.
    \item UniST~\citep{yuan2024unist}: This work can be viewed as a parallel effort in two different directions. It focuses on large-scale pre-training in the initial stage. While it also uses some empirical prompt learning methods for fine-tuning, it is not suitable for continuous incremental scenarios. Another key point is that it is limited to spatio-temporal grid data.
    \item FlashST~\citep{li2024flashst}: This work is essentially limited to static spatio-temporal graphs, adjusting data distributions via prompt embedding regularization to achieve efficient model adaptation across different spatio-temporal prediction tasks. Therefore, it cannot be reasonably applied to continuous spatio-temporal graph learning.
    \item CMuST~\citep{yi2024get}: This recent work addresses spatio-temporal learning in continuous multitask scenarios, enhancing individual tasks by jointly modeling learning tasks in the same spatio-temporal domain. However, this framework mainly focuses on task-level continual learning, transitioning from one task to another. It does not address the continuous spatio-temporal graph learning characteristics encountered in real spatio-temporal prediction scenarios. Hence, this approach is also not suitable for direct comparison to our single-task dynamic scenarios.
\end{itemize}
% }

\textbf{Metrics Detail.} We use different metrics such as MAE, RMSE, and MAPE. Formally, these metrics are formulated as following:
$$
\text{MAE} = \frac{1}{n} \sum_{i=1}^{n} |y_i - \hat{y}_i|,~~~\text{RMSE} = \sqrt{\frac{1}{n} \sum_{i=1}^{n} (y_i - \hat{y}_i)^2},~~~\text{MAPE}=\frac{100\%}{n}\sum_{i=1}^n\left|\frac{\hat{y}_i-y_i}{y_i}\right|
$$
where $n$ represents the indices of all observed samples, $y_i$ denotes the $i$-th actual sample, and $\hat{y_i}$ is the corresponding prediction.

\begin{table}[h]
\centering
% \vspace{-2mm}
\caption{Hyperparameters setting.}
\label{tab:Hyperparameters}
% \vspace{-2mm}
\fontsize{9}{12} \selectfont
\setlength{\tabcolsep}{2.8mm}{}	
\begin{tabular}{c|c}
     \toprule
     Graph operator hidden dimension & 64 \\
     Sequence operator (TCN) kernel size & 3 \\
     Graph layer & 2\\
     Sequence layer & 1\\
     \midrule
     Training epochs & 100\\
     Batch size & 128\\
     Adam $\epsilon$ & 1e-8\\
     Adam $\beta$ & (0.9,0.999)\\
     Learning rate & 0.03 / 0.01\\
     Loss Function & MSE\\
     Dropout & 0.1 / 0.0\\
     \bottomrule
\end{tabular}
% \vspace{-4mm}
\end{table}

\textbf{Parameter Detail.} Detailed hyperparameters settings are shown in Table~\ref{tab:Hyperparameters}. We use the same parameter configurations for our \model, along with the other baseline methods according to the recommendation of previous studies~\cite{chen2017traffic,wang2023knowledge}. Our only hyper-parameter $k$ is set to $6$ by default. All experiments are conducted on a Linux server equipped with a 1 × AMD EPYC 7763 128-Core Processor CPU (256GB memory) and 2 × NVIDIA RTX A6000 (48GB memory) GPUs. To carry out benchmark testing experiments, all baselines are set to run for a duration of 100 epochs by default, with specific timings contingent upon the method with early stop mechanism. 

% \textit{The source code, data, experimental results, logs, and model weights can be accessed in the anonymous repository at \textcolor{magenta}{\url{https://github.com/Onedean/EAC}}.
% }

\section{More Results}\label{appendix_moreresult}

% \revision{
To evaluate the performance differences between our \model and models trained individually for each period, we compare the average forecasting performance over 12 time steps for each period on the \textit{PEMS-Stream} dataset. This comparison includes our \textit{\model} and the individually trained \textit{Retrain-ST} method, with results averaged over five random runs and shown in Table~\ref{tab:every_pems}.
% }

\begin{table}[ht]
\centering
\renewcommand{\arraystretch}{1.8}
\caption{Comparison of the average performance of Retrain-ST and \model across different periods in PEMS-Stream benchmark.}
\scalebox{0.68}{
\begin{tabular}{cccccccccc}
\toprule
\textbf{Methods}    & \textbf{Metric} & \textbf{2011}      & \textbf{2012}      & \textbf{2013}      & \textbf{2014}      & \textbf{2015}      & \textbf{2016}      & \textbf{2017}      & \textbf{Average}    \\ \hline
\multirow{3}{*}{\textbf{\textit{Retrain-ST}}} 
& MAE  & 14.26\textcolor{gray}{\text{\scriptsize±0.13}} & 13.69\textcolor{gray}{\text{\scriptsize±0.26}} & 13.88\textcolor{gray}{\text{\scriptsize±0.18}} & 14.76\textcolor{gray}{\text{\scriptsize±0.11}} & 14.14\textcolor{gray}{\text{\scriptsize±0.19}} & 13.70\textcolor{gray}{\text{\scriptsize±0.15}} & 15.26\textcolor{gray}{\text{\scriptsize±0.57}} & 14.24\textcolor{gray}{\text{\scriptsize±0.12}} \\ 
& RMSE & 21.97\textcolor{gray}{\text{\scriptsize±0.21}} & 21.60\textcolor{gray}{\text{\scriptsize±0.42}} & 22.50\textcolor{gray}{\text{\scriptsize±0.27}} & 23.82\textcolor{gray}{\text{\scriptsize±0.19}} & 23.15\textcolor{gray}{\text{\scriptsize±0.25}} & 24.40\textcolor{gray}{\text{\scriptsize±0.26}} & 24.98\textcolor{gray}{\text{\scriptsize±0.62}} & 23.20\textcolor{gray}{\text{\scriptsize±0.16}} \\ 
& MAPE(\%) & 18.92\textcolor{gray}{\text{\scriptsize±1.54}} & 19.33\textcolor{gray}{\text{\scriptsize±0.39}} & 20.19\textcolor{gray}{\text{\scriptsize±1.28}} & 22.06\textcolor{gray}{\text{\scriptsize±2.00}} & 20.33\textcolor{gray}{\text{\scriptsize±1.57}} & 19.48\textcolor{gray}{\text{\scriptsize±1.68}} & 21.82\textcolor{gray}{\text{\scriptsize±3.58}} & 20.30\textcolor{gray}{\text{\scriptsize±0.44}} \\
\hline

\multirow{3}{*}{\textbf{\textit{\model}}}
& MAE  & 13.46\textcolor{gray}{\text{\scriptsize±0.15}} & 13.00\textcolor{gray}{\text{\scriptsize±0.03}} & 13.07\textcolor{gray}{\text{\scriptsize±0.05}} & 14.00\textcolor{gray}{\text{\scriptsize±0.05}} & 13.55\textcolor{gray}{\text{\scriptsize±0.04}} & 13.01\textcolor{gray}{\text{\scriptsize±0.04}} & 14.57\textcolor{gray}{\text{\scriptsize±0.09}} & 13.53\textcolor{gray}{\text{\scriptsize±0.06}} \\ 
& RMSE & 20.49\textcolor{gray}{\text{\scriptsize±0.23}} & 20.19\textcolor{gray}{\text{\scriptsize±0.05}} & 20.90\textcolor{gray}{\text{\scriptsize±0.09}} & 22.27\textcolor{gray}{\text{\scriptsize±0.09}} & 21.90\textcolor{gray}{\text{\scriptsize±0.80}} & 23.08\textcolor{gray}{\text{\scriptsize±0.03}} & 23.54\textcolor{gray}{\text{\scriptsize±0.12}} & 21.77\textcolor{gray}{\text{\scriptsize±0.10}} \\ 
& MAPE(\%) & 17.85\textcolor{gray}{\text{\scriptsize±0.35}} & 18.12\textcolor{gray}{\text{\scriptsize±0.42}} & 18.51\textcolor{gray}{\text{\scriptsize±0.17}} & 20.04\textcolor{gray}{\text{\scriptsize±0.40}} & 19.30\textcolor{gray}{\text{\scriptsize±0.31}} & 17.86\textcolor{gray}{\text{\scriptsize±0.17}} & 21.16\textcolor{gray}{\text{\scriptsize±0.29}} & 18.98\textcolor{gray}{\text{\scriptsize±0.08}} \\
\bottomrule
\end{tabular}
}
\label{tab:every_pems}
% \vspace{-4mm}
\end{table}

% \revision{
\textbf{Result Analysis:}
\ding{182} (Retrain-ST Underperformance): Retrain-ST exhibits unsatisfactory performance because it relies solely on limited data to train models for specific periods, failing to effectively leverage historical information from pre-trained models. In continuous spatiotemporal graph scenarios, underlying spatiotemporal dependencies are typically shared across periods. Leveraging historical training data helps models perform better, a benefit clearly observed with our EAC approach. \ding{183} (Performance Variability Across Different Periods): Performance differences across periods are evident even with the full dataset. These differences primarily stem from factors such as data noise and varying levels of learning difficulty in different periods.
% }

% \revision{
\textbf{Further Analysis:}
We visualize the noise levels across periods in the \textit{PEMS-Stream} dataset, as shown in Figure~\ref{fig.fft}. Specifically, Fourier transform is used to convert time series from the time domain to the frequency domain, where noise is typically represented by high-frequency components. A higher proportion of high-frequency energy indicates greater noise. Analyzing the frequency-domain characteristics of the spatio-temporal data reveals that for 2017 (red line), low-frequency components are minimal (Ratio~\textgreater~0.42), while in 2014 (purple line), high-frequency components are more concentrated (0.46 $\sim$ 0.48). This indicates that data from these periods are noisier and harder to learn, a trend consistent with the results in Table~\ref{tab:every_pems}.
% }
\begin{figure}[htbp!]
% \vspace{-5mm}
\begin{center}
\includegraphics[width=0.75\linewidth]{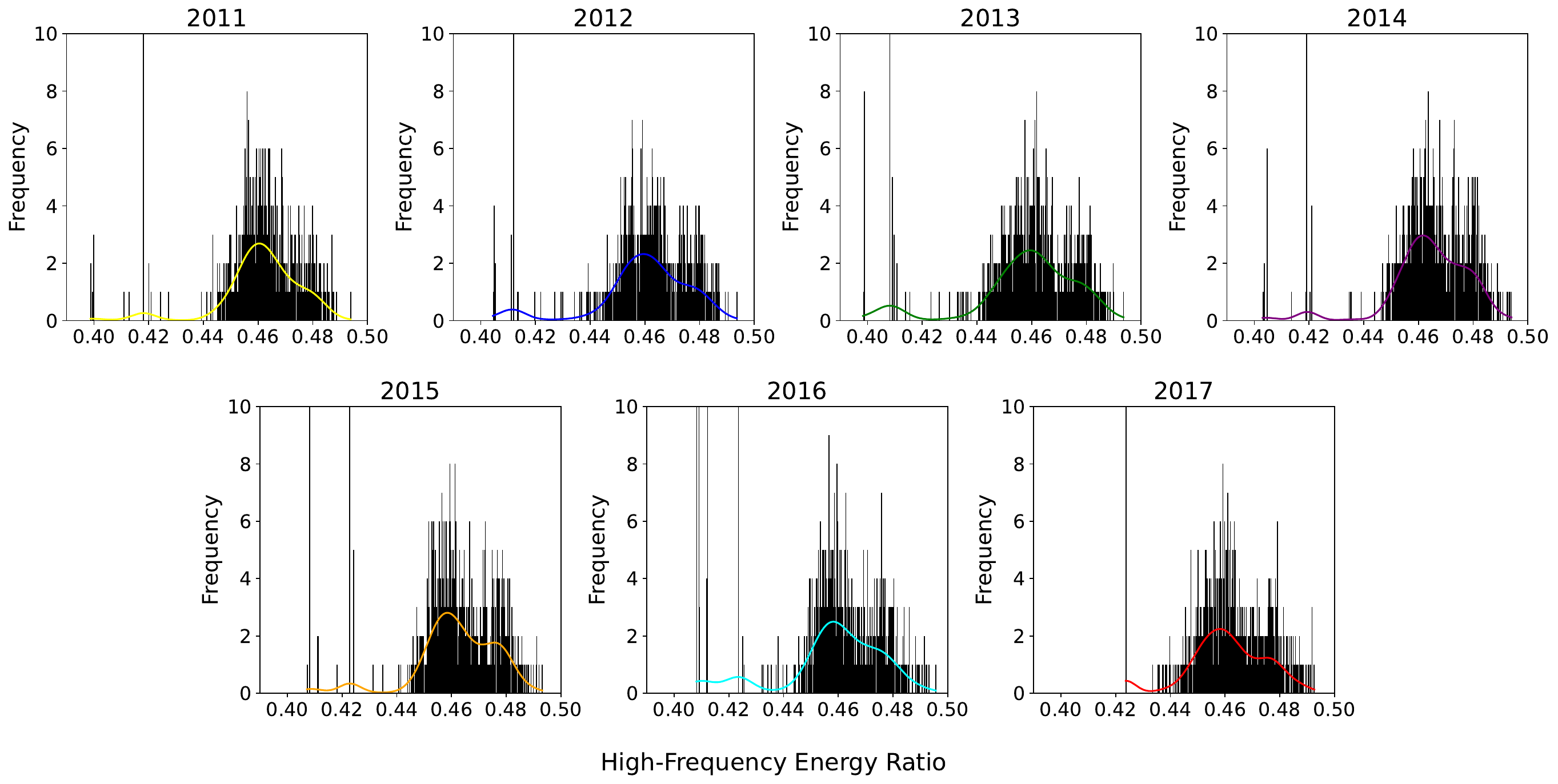}
\caption{High-frequency energy ratios across different periods in PEMS-Stream benchmark.}
\label{fig.fft}
% \vspace{-6mm}
\end{center}
\end{figure}

\section{STGNNs Component Details}\label{appendix_stgnn}

In the design of spatio-temporal graph neural network (STGNN), spectral-based and spatial-based graph convolutions have different operation modes, and sequence convolution operators can also be divided into recurrent-based, convolution-based, and attention-based. Below are the formulas and explanations of each type.

\subsection{Spectral-based Graph Convolution Formula}

Spectral graph convolution relies on the eigenvalue decomposition of the graph Laplacian matrix. The specific formula is as follows:

\textbf{Graph Convolution:}
$$
\mathbf{Z} = \mathbf{U} g_\theta(\mathbf{\Lambda}) \mathbf{U}^\top \mathbf{X}
$$
\begin{itemize}[leftmargin=*,parsep=3pt]
    \item $\mathbf{U}$ is the matrix of eigenvectors of the graph Laplacian $\mathbf{L} = \mathbf{D} - \mathbf{A}$.
    \item $\mathbf{\Lambda}$ is the diagonal matrix of eigenvalues of the Laplacian.
    \item $\mathbf{X}$ is the node feature matrix.
    \item $g_\theta(\mathbf{\Lambda})$ is the convolution kernel function on the eigenvalues, typically parameterized (\eg, using Chebyshev polynomial approximations).
    
\end{itemize}

\textbf{Approximation of Convolution Kernel:}

Using Chebyshev polynomial approximations, one can express the formula as:
$$
\mathbf{Z} = \sum_{k=0}^{K} \theta_k \mathbf{T}_k(\tilde{\mathbf{L}}) \mathbf{X}
$$
\begin{itemize}[leftmargin=*,parsep=3pt]
    \item $\tilde{\mathbf{L}} = 2 \mathbf{L}/\lambda_{\text{max}} - \mathbf{I}$ is the normalized Laplacian matrix.
    \item $\mathbf{T}_k(\tilde{\mathbf{L}})$ is the Chebyshev polynomial, and $\theta_k$ are the learnable parameters.
\end{itemize}

\subsection{Spatial-based Graph Convolution Formula}

Spatial graph convolution aggregates information directly using the adjacency matrix. The classic formula for spatial graph convolution is as follows:

$$
\mathbf{Z} = \sigma \left( \mathbf{\hat{A}} \mathbf{X} \mathbf{W} \right)
$$
\begin{itemize}[leftmargin=*,parsep=3pt]
    \item $\mathbf{\hat{A}} = \mathbf{D}^{-1/2} \mathbf{A} \mathbf{D}^{-1/2}$ is the normalized form of the adjacency matrix.
    \item $\mathbf{X}$ is the node feature matrix.
    \item $\mathbf{W}$ is the learnable weight matrix.
    \item $\sigma(\cdot)$ is a nonlinear activation function (\eg, ReLU).
\end{itemize}

\subsection{Recurrent-based Sequence Modeling Formula}

Recurrent-based sequence convolution typically uses recurrent neural networks (RNNs), such as LSTM or GRU. we choose LSTM, the formulas are as follows:

\textbf{LSTM Equations:}

\begin{itemize}[leftmargin=*,parsep=3pt]
    \item Forget Gate: $\mathbf{f}_t = \sigma \left( \mathbf{W}_f [\mathbf{h}_{t-1}, \mathbf{x}_t] + \mathbf{b}_f \right)$
    \item Input Gate: $\mathbf{i}_t = \sigma \left( \mathbf{W}_i [\mathbf{h}_{t-1}, \mathbf{x}_t] + \mathbf{b}_i \right)$
    \item Candidate Memory: $\mathbf{\tilde{C}}_t = \tanh \left( \mathbf{W}_C [\mathbf{h}_{t-1}, \mathbf{x}_t] + \mathbf{b}_C \right)$
    \item Output Gate: $\mathbf{o}_t = \sigma \left( \mathbf{W}_o [\mathbf{h}_{t-1}, \mathbf{x}_t] + \mathbf{b}_o \right)$
    \item Cell State Update: $\mathbf{C}_t = \mathbf{f}_t \odot \mathbf{C}_{t-1} + \mathbf{i}_t \odot \mathbf{\tilde{C}}_t$
    \item Hidden State Update: $\mathbf{h}_t = \mathbf{o}_t \odot \tanh(\mathbf{C}_t)$
\end{itemize}

\subsection{Convolution-based Sequence Modeling Formula}

Convolution-based sequence convolution employs one-dimensional (1D) convolution to process time series data. The formula is as follows:

$$
\mathbf{Z} = \sigma \left( \text{Conv1D}(\mathbf{X}, \mathbf{W}, \mathbf{b}) \right)
$$

\begin{itemize}[leftmargin=*,parsep=3pt]
    \item $\mathbf{X} \in \mathbb{R}^{B \times C_{\text{in}} \times T}$ is the input sequence, where $B$ is the batch size, $C_{\text{in}}$ is the number of input channels, and $T$ is the time steps.
    \item $\mathbf{W} \in \mathbb{R}^{C_{\text{out}} \times C_{\text{in}} \times K}$ is the convolution kernel, with $K$ being the kernel size.
    \item $\mathbf{b}$ is the bias vector.
    \item $\sigma(\cdot)$ is the activation function.
\end{itemize}

\subsection{Attention-based Sequence Modeling Formula}

The attention-based model commonly uses the formula for the multi-head self-attention mechanism (\eg, in Transformers):

\textbf{Attention Weights:}

$$
\text{Attention}(Q, K, V) = \text{softmax} \left( \frac{Q K^\top}{\sqrt{d_k}} \right) V
$$

\begin{itemize}[leftmargin=*,parsep=3pt]
    \item $Q = \mathbf{X} \mathbf{W}_Q$ is the query matrix.
    \item $K = \mathbf{X} \mathbf{W}_K$ is the key matrix.
    \item $V = \mathbf{X} \mathbf{W}_V$ is the value matrix.
    \item $d_k$ is the dimension of the key vectors.
\end{itemize}

\textbf{Multi-head Attention:}

$$
\text{MultiHead}(Q, K, V) = \text{Concat} \left( \text{head}_1, \dots, \text{head}_h \right) \mathbf{W}_O
$$

Each head is defined as $\text{head}_i = \text{Attention}(Q_i, K_i, V_i)$.

\subsection{Summary}

\begin{itemize}[leftmargin=*,parsep=3pt]
    \item \textbf{Spectral-based Graph Convolution Operator}: Achieves graph convolution through the eigenvalue decomposition of the graph Laplacian.
    \item \textbf{Spatial-based Graph Convolution Operator}: Directly aggregates node features using the adjacency matrix.
    \item \textbf{Recurrent-based Sequence Operator}: Uses LSTM or GRU to capture temporal dependencies.
    \item \textbf{Convolution-based Sequence Operator}: Employs 1D convolution to handle time series data with fixed window sizes.
    \item \textbf{Attention-based Sequence Operator}: Uses multi-head self-attention to model global dependencies in long sequences.
\end{itemize}

These formulas form the core of spatio-temporal graph neural networks, allowing the combination of different operations to form the basis architecture of most current models.

\section{More Discussion}\label{more_discuss}

\subsection{Discussion}

% \revision{
\textbf{Parameter Inflation}: In addition to the analysis mentioned in the main text, we further provide a more detailed analysis of the concerns about parameter inflation. Specifically, this includes the following points: 
% }
% \revision{
\begin{itemize}[leftmargin=*,parsep=5pt]
    \item \textit{(Adjustable Parameter Count)}: As we introduced in main text, there is no free lunch in this regard. While introducing the prompt parameter pool significantly improves performance, there is an inevitable risk of parameter inflation as the dataset size grows with more nodes. Therefore, we dedicated an entire section to empirical observations and theoretical analysis to explain how to reasonably eliminate redundant parameters, offering a compress principle. Consequently, the adjustable parameter count can be dynamically reduced with changes in $k$. In the hyperparameter analysis section, we further demonstrate that even with a limited number of tuning parameters, our approach still achieves SOTA performance. Thus, in large-scale scenarios, we can appropriately select the hyperparameter $k$ to balance performance and efficiency.
    \item \textit{(Freezing Backbone Model)}: One of the main advantages of EAC is that, compared to existing methods, freezing the backbone model directly leads to significant efficiency improvements, even in large-scale datasets. For example, we maintained the fastest average training speed on the air-stream dataset, and this can be further accelerated by adjusting parameter $k$. Therefore, our approach is clearly the best choice compared to others.
    \item \textit{(Advantages of In-Memory Storage)}: Another point worth noting is that we need to point out that our prompt parameter pool is separate from the backbone model. Therefore, we can naturally store it in memory and only load it when needed. Therefore, for practical applications, overloading the prompt parameter pool is unnecessary worry.
    \item \textit{(Easy-to-manage solution)}: As for even larger global datasets, the current backbone model size would clearly be insufficient and would need to be scaled up. In contrast, the growth of the prompt parameter pool can be considered a more manageable solution.
\end{itemize}
% }

% \revision{
\textbf{Use All Historical Data}: Stacking all historical data together for training might seem reasonable at first glance, but it is actually impractical and reflects a misunderstanding of continuous spatio-temporal graph learning. As we pointed out in the last sentence of the first paragraph of the introduction: \textit{Due to computational and storage costs, it is often impractical to store all data and retrain the entire STGNN model from scratch for each time period.} Thus, the primary motivation behind continuous spatio-temporal graph modeling methods is:
% }
% \revision{
\begin{itemize}[leftmargin=*,parsep=5pt]
    \item \textit{(Training and Storage Costs)}: Storing all historical data and retraining is associated with unacceptable training and storage costs. Training costs are easy to understand, but storage costs are significant because the model is usually only a fraction of the size of the data (e.g., in the PEMS-Stream benchmark, the model size per year is 36KB compared to the dataset size of 1.3GB, approximately 1:37,865).
\end{itemize}
% }
% \revision{
In addition to this fundamental motivation, we would like to share further insights:
% }
% \revision{
\begin{itemize}[leftmargin=*,parsep=5pt]
    \item \textit{(Privacy Risks)}: In common continuous modeling tasks such as vision and text, a key improvement direction is to avoid accessing historical data, as this poses privacy risks beyond storage costs~\citep{wang2024comprehensive}. Accessing models that store knowledge from historical data is clearly safer. Common improvements, such as regularization-based and prototype-based methods, are moving in this direction.
    % \item \textit{(Practical Impossibility)}: Unlike vision and text tasks, spatio-temporal graphs have a unique property: their nodes are constantly changing. This introduces practical issues that make it nearly impossible to implement. For example, when training a neural network, data must be fixed into a certain format for each batch to fully leverage GPU batch processing capabilities. The number of nodes in spatio-temporal graphs changes across different periods, making this impractical. Therefore, most methods seek a backbone STGNN independent of node count to accept spatio-temporal graph data from different time periods, but it still requires that node counts must be consistent during training within the same period.
    \item \textit{(Existing Approximation Methods)}: The existing \textit{Online-ST} methods can be seen as an approximation solution to training with all historical data. However, this often suffers from catastrophic forgetting and the need for full parameter adjustments, issues that our \model effectively addresses.
\end{itemize}
% }

\subsection{Limitation} 

In this paper, we thoroughly investigate methods for continual spatio-temporal forecasing. Based on empirical observations and theoretical analysis, we propose two fundamental tuning principle for sustained training. In practice, we consider this kind of continual learning to fall under the paradigm of continual fine-tuning. However, given the superiority and generality of our approach, we believe this provides an avenue for future exploration of continual pre-training. While we have made a small step in this direction, several limitations still warrant attention. 

\ding{182} All current baselines and datasets primarily focus on scenarios involving the continuous expansion of spatio-temporal graphs, with little consideration given to their reduction. This focus is reasonable, as spatio-temporal data is typically collected from observation stations, which generally do not disappear once established. Consequently, existing streaming spatio-temporal datasets are predominantly expansion-oriented. Nonetheless, there are exceptional circumstances, such as monitoring anomalies at stations or the occurrence of natural disasters leading to station closures, resulting in node disappearance. But, we want to emphasize that our method can effectively handle such situations, as our prompt parameter pool design is node-level, allowing for flexible selection of target node sets for parameter integration.

\ding{183} All current baselines and datasets span a maximum of seven years. We believe this is a reasonable constraint, as a longer time frame might lead to drastic changes in spatio-temporal patterns. A more practical approach would be to retrain the model directly in the current year. However, research extending beyond this time span remains worthy of exploration, which we leave for future work.

\ding{184} Due to the node-level design of our prompt parameter pool, the issue of parameter inflation is inevitable. Although we have proposed effective compression principles in this paper, other avenues for compression, such as parameter sparsification and pruning, also merit investigation. Given that we have innovatively provided guiding principles for compression in this study, we reserve further improvement efforts for future research.

\subsection{Future work}

One lesson learned from our experiments is that the initially pre-trained spatio-temporal graph model is crucial for the subsequent continuous fine-tuning process. This is highly analogous to today's large language models, which compress rich intrinsic knowledge from vast corpora, allowing them to perform well with minimal fine-tuning in specific domains. Therefore, we consider a significant future research direction to be the training of a sufficiently large foundation spatio-temporal model from scratch, utilizing data from diverse fields and scenarios. While some discussions and studies have emerged recently, we believe that a truly foundation spatio-temporal model remains a considerable distance away. Thus, we view this as a long-term goal for future work.

\end{document}